\DeclareMathOperator{\diverg}{div}
\definecolor{cvprblue}{rgb}{0.21,0.49,0.74}
\newtheorem{proposition}{Proposition}
\newtheorem{lemma}{Lemma}
\newtheorem{definition}{Definition}
\newcommand\blfootnote[1]{%
  \begingroup
  \renewcommand\thefootnote{}\footnote{#1}%
  \addtocounter{footnote}{-1}%
  \endgroup
}
\parbox{\linewidth}{\vspace{2.5em}
To appear in Proceedings of the \emph{IEEE/CVF Conference on Computer Vision and Pattern Recognition (CVPR)}, Seattle, WA, USA, 2024. \\ \hrule \vspace {\baselineskip}
\copyright~2024 IEEE. Personal use of this material is permitted. Permission from IEEE must be obtained for all other uses, in any current or future media, including reprinting/republishing this material for advertising or promotional purposes, creating new collective works, for resale or redistribution to servers or lists, or reuse of any copyrighted component of this work in other works.
}\end{spacing}}}
\title{Finsler-Laplace-Beltrami Operators with Application to Shape Analysis}
\author{Simon Weber\textsuperscript{1,2*} \\ 
{\tt\small sim.weber@tum.de} \hspace{-1em}
\and Thomas Dag\`es\textsuperscript{3*} \\ 
{\tt\small thomas.dages@cs.technion.ac.il} \hspace{-1em}
\and Maolin Gao\textsuperscript{1,2} \\ 
{\tt\small maolin.gao@tum.de} \hspace{-1em}
\and Daniel Cremers\textsuperscript{1,2} \\ 
{\tt\small cremers@tum.de}
}
\begin{document}

\maketitle
\thispagestyle{fancy}

\begin{strip}%
\centering
\includegraphics[width=1.9\columnwidth]{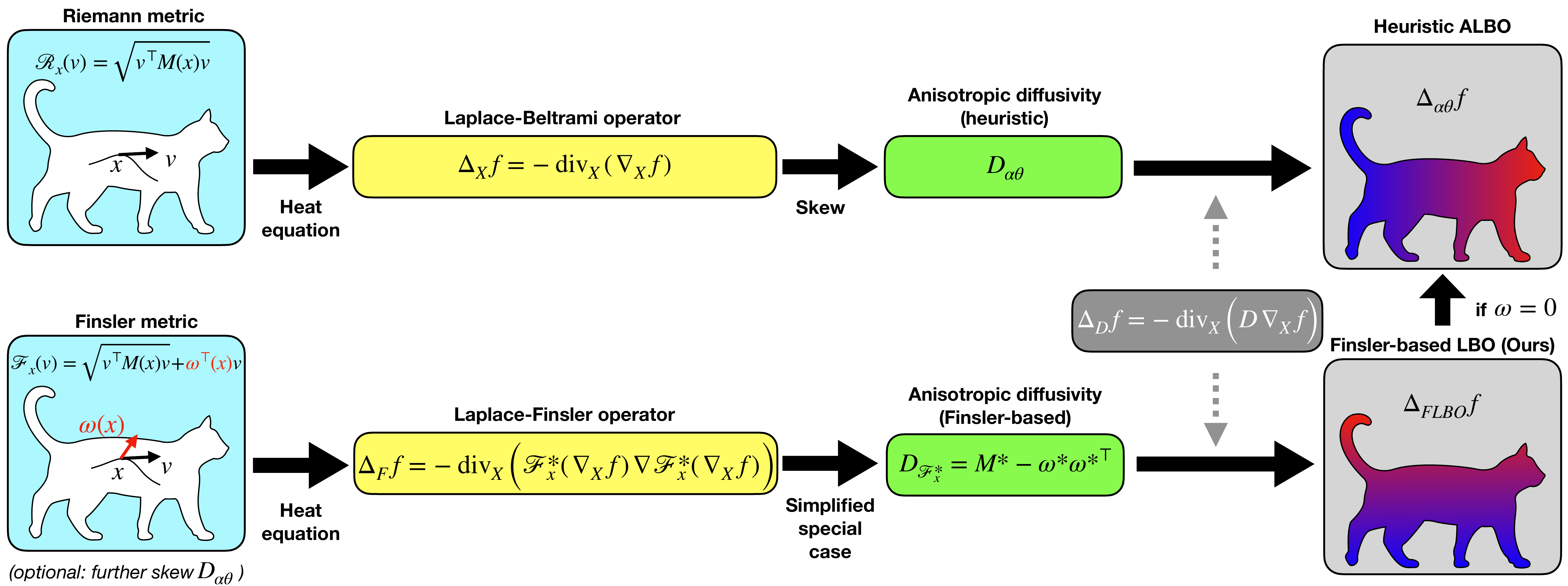}
   \captionof{figure}{Simplified overview of the derivation of the
   Finsler-Laplace-Beltrami operator (FLBO), built from a Finsler metric. It generalizes the traditional heuristic anisotropic Laplace-Beltrami operators (ALBO) \cite{boscaini2016,li2020} (top).
   The FLBO can directly replace the ALBOs in surface processing tasks like shape correspondence, notably by constructing shape-dependent anisotropic convolutions in their spectral domain. 
   }
   \label{fig: overview theory teaser}
\end{strip}

\begin{abstract}
\blfootnote{* equal contribution.} 
\footnotetext[1]{Technical University of Munich}
\footnotetext[2]{Munich Center for Machine Learning}
\footnotetext[3]{Technion - Israel Institute of Technology}
The Laplace-Beltrami operator (LBO) emerges from studying manifolds equipped with a Riemannian metric. It is often called the {\em swiss army knife of geometry processing} as it allows to capture intrinsic shape information and gives rise to heat diffusion, geodesic distances, and a multitude of shape descriptors.  It also plays a central role in geometric deep learning.
In this work, we explore Finsler manifolds as a generalization of Riemannian manifolds.
We revisit the Finsler heat equation and derive a Finsler heat kernel and a Finsler-Laplace-Beltrami Operator (FLBO): a novel theoretically justified anisotropic Laplace-Beltrami operator (ALBO). In experimental evaluations we demonstrate that the proposed FLBO is a valuable alternative to the traditional Riemannian-based LBO and ALBOs for spatial filtering and shape correspondence estimation. We hope that the proposed Finsler heat kernel and the FLBO will inspire further exploration of Finsler geometry in the computer vision community.

\end{abstract}

\section{Introduction}
\label{sec:intro}

\subsection{Finsler versus Riemann}

In his PhD thesis of 1918, Paul Finsler introduced the notion of a differentiable manifold equipped with a Minkowski norm on each tangent space \cite{finsler1918ueber}. It constitutes a generalization of Riemannian manifolds \cite{riemann1854uber} that Elie Cartan later referred to as {\em Finsler manifolds} \cite{cartan1933espaces}.  While Riemannian manifolds are omni-present in computer vision and machine learning, Finsler manifolds have remained largely unexplored.

Essentially, a Finsler metric is the same as a Riemannian one without the quadratic assumption. The consequence is that the metric not only depends on the location, but also on the tangential direction of motion. This implies that the length of a curve between two points $A$ and $B$ may change whether we are traversing it from $A$ to $B$ or the other way round, thus geodesic curves from $A$ to $B$ may differ from those from $B$ to $A$. \begin{figure}[ht]
  \centering
   \includegraphics[width=0.35\columnwidth]{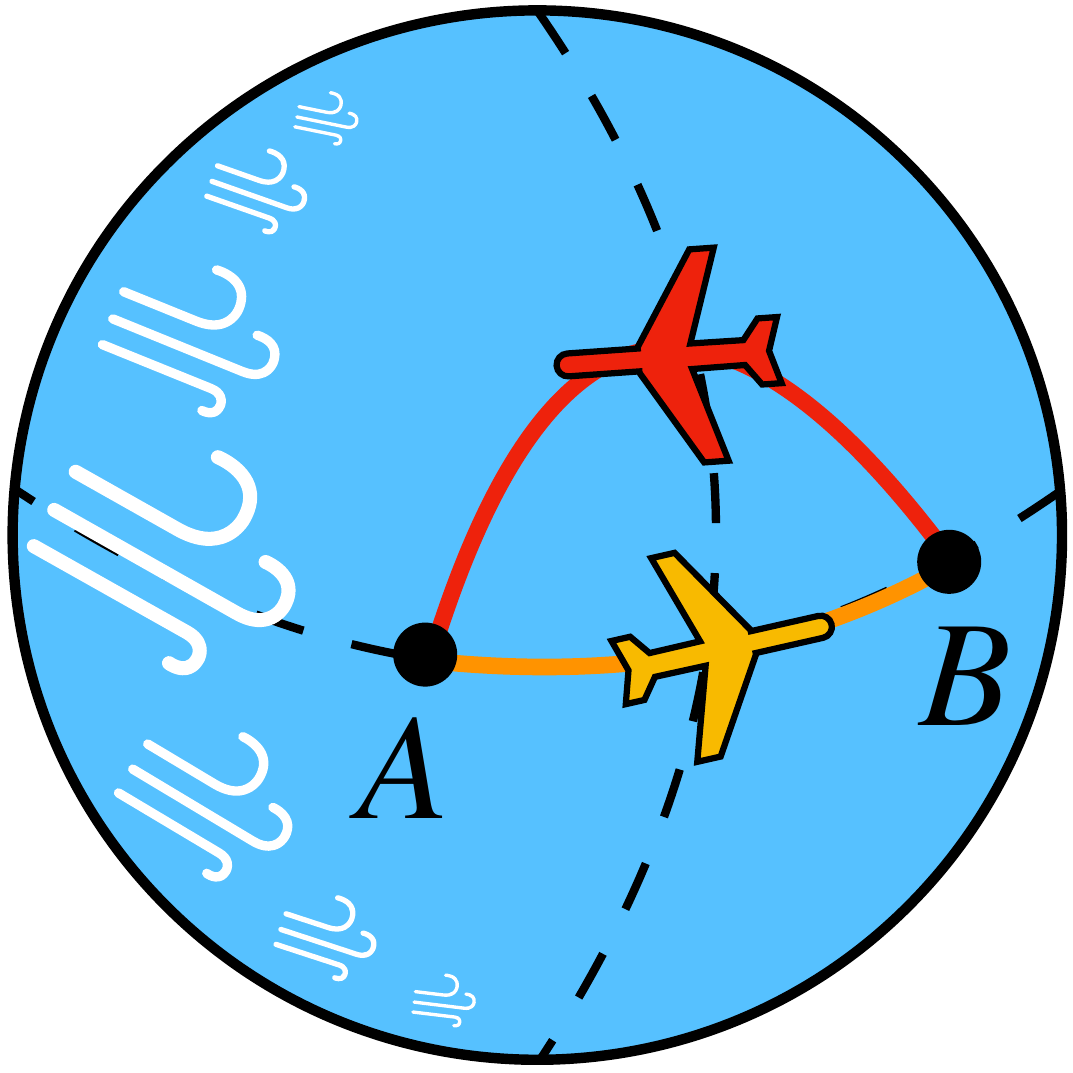}
   \caption{Due to non-uniform wind currents, the geodesic curve from $A$ to $B$ differs from the one from $B$ to $A$. While not possible in Riemannian geometry, such asymmetries are characteristic of Finsler manifolds.}
   \label{fig: finsler planes}
\end{figure}
This means that on a Finsler manifold, there is a built-in asymmetric anisotropy, which contrasts with the symmetric Riemannian case. 
In the Riemannian world, moving from an isotropic to a non isotropic metric allows more flexibility to capture interesting features on the manifolds. For instance, on a human hand shape, we may want a point on a finger to be close in some sense to all the points along the ring of the finger more than those along the length of the finger. Shifting from a Riemannian world to a Finsler world allows to increase the flexibility of anisotropy by also allowing non-symmetric distances. For many physical systems the Finsler metric is actually the natural choice of distance. For instance, a boat on a river or a plane in the wind might find that a path along the maximum current provides the shortest travel time whereas the return voyage might require moving away from the maximum current --  see \cref{fig: finsler planes}.

\subsection{Contribution}

The aim of this paper is to explore Finsler manifolds, study the Finsler heat diffusion, introduce a Finsler-Laplace-Beltrami Operator (FLBO), and demonstrate its applicability to shape analysis and geometric deep learning via the shape matching problem. In particular:

\begin{itemize}
    \setlength\itemsep{0.7em}
    \item We present a theoretical exploration of Finsler manifolds in a concise and self-contained form that is accessible to the computer vision and machine learning community.
    \item We revisit the heat diffusion on Finsler manifolds and derive a tractable equation.
    \item We generalize the traditional (anisotropic) Laplace-Beltrami operator
    to a Finsler-Laplace-Beltrami operator (FLBO).
    \item By leveraging the relationship between the diffusivity coefficient and the Riemannian metric, we propose an easy-to-implement discretization of the FLBO. 
    \item We perform extensive evaluation of the proposed approach for shape correspondence based on the FLBO. We highlight the benefits of this new operator in terms of accuracy on both full and partial shape datasets.
    \item We release our code as open source to facilitate further research: \url{https://github.com/tum-vision/flbo}.
\end{itemize}

\section{Related work}
\label{sec:related_works}
The research domains most relevant to our work are  geometric deep learning for shape correspondence and the use of the Finsler metric in computer vision.

\subsection*{Geometric deep learning for shape correspondence}

Finding correspondences between shapes is a fundamental task in computer vision and a key component of many geometry processing applications, such as object recognition, shape reconstruction, texture transfer, or shape morphing. Initially solved with handcrafted approaches \cite{vestner2017efficient,chen2015robust,ren2018continuous},
the shape correspondence problem has recently benefited from the success of neural networks
\cite{lecun1998lenet,krizhevsky2012alexnet,simonyan2014vgg,szegedy2016inception,he2016resnet,goodfellow2016deep} to learn features achieving tremendous results. 

Initially seen in the Euclidean space, shapes are nowadays mostly considered as embedded manifolds. Intrinsic information, such as geodesic distances, are products of the heat diffusion paradigm, which has mostly been studied on manifolds equipped with a Riemannian metric. For such manifolds, heat diffusion is governed by the Poisson equation, and thus naturally the Laplace-Beltrami operator (LBO) plays a central role \cite{bronstein2008numerical}. 
The LBO is traditionally isotropic on the manifold, but recent approaches have heuristically injected anisotropy into it, producing anisotropic LBOs (ALBO) \cite{andreux2014}. Coupled with spatial filtering \cite{li2020} ALBOs have shown impressive accuracy in shape correspondence. 

We here focus on methods centred around anisotropic ALBOs. Nevertheless, many other types of approaches exist for full and partial shape correspondence. In particular, we point out the vast literature based on the functional map framework \cite{ovsjanikov2012functional,ovsjanikov2016computing,litany2017dfm,halimi2019unsupervised,bracha2020shape,donati2020deepGeoMaps,sharma2020weakly,attaiki2021dpfm,cao2023unsupervised,bracha2023partial}.

In \cite{litman2013learning}, the well-known heat kernel signature and wave kernel signature are generalized by leveraging the spectral properties of deformable shapes and learning task-specific spectral filters. 
This work is extended in \cite{boscaini2016cremers} to direction-sensitive feature descriptors by using the eigendecomposition of anisotropic Laplace-Beltrami Operators (ALBO) \cite{andreux2014}. Complementary to such spectral approaches, a line of works builds on spatial filtering methods.
The idea is to construct patch operators directly on manifolds or graphs in the spatial domain. 
In \cite{Mitchel_2021_ICCV}, a convolution operator acting on vector fields without constraints on the filters themselves is defined. In \cite{boscaini2016}, a local intrinsic representation of a function defined on a manifold is extracted with anisotropic heat kernels as spatial weighting functions. 
This idea is extended in \cite{li2020} by introducing an anisotropic convolution operator in the spectral domain. 
In this work, the eigendecomposition of multiple ALBOs proposed in \cite{andreux2014}
is filtered via Chebychev polynomials with learnable coefficients. This method achieves impressive results for shape correspondence in terms of accuracy.

\subsection*{Finsler computer vision}

Computer vision on Finsler manifolds, a natural generalization of Riemannian manifolds, is a largely uncharted research area. 
In \cite{ratliff2021}, robotics applications are pushed beyond Riemannian geometry and the Euler-Lagrange equation is reviewed from a Finsler point of view.
A few works in image segmentation use a Finsler metric, either to add the orientation of an image as an extra spatial dimension \cite{chen2016minpath}, or to reformulate geodesic evolutions for region-based active contours \cite{chen2017elastica,chen2016geoevo}. The heat equation on Finsler manifolds has been defined in mathematics \cite{ohta2009}. In \cite{yang2019}, a geodesic distance based on Finsler metrics is applied to contour detection. Nevertheless the chosen heat equation is theoretically unmotivated and is solely addressed from a heuristic point of view.

\section{Background}
\label{sec:background}

We traditionally model a 3D shape as a two-dimensional compact Riemannian manifold $(X, M)$, with $X$ a smooth manifold equipped with an inner product $\langle u,v\rangle_M = u^\top M(x) v$ for vectors $u,v$ on the tangent space $T_x X$ at each point $x\in X$.
We alternatively model the shape as a Finsler manifold  $(X,\mathcal{F}_x)$ equipped with a Randers metric $\mathcal{F}_{x}: T_{x}X \rightarrow \mathbb{R}$ at point $x$. This manifold can be viewed as a generalization of the Riemannian model, as the metric does not necessarily induce an inner product. We point out that we recover a Riemannian manifold if $\mathcal{F}_{x}$ is symmetric.
In this section we briefly introduce the Riemannian heat diffusion, the Randers metric and the Finsler heat diffusion.

\subsection{Riemannian heat diffusion}
\paragraph{Laplace-Beltrami operator.}
The Laplace-Beltrami Operator (LBO) on the Riemannian shape $X$ is defined as:
\begin{equation}
\Delta_{X}f(x) = -\diverg_{X}(\nabla_{X}f(x)),
\end{equation}
with $\nabla_{X}f$ and $\diverg_{X}f$ being the intrinsic gradient and divergence of $f(x) \in L^{2}(X)$. The LBO is a positive semi-definite operator with a real eigen-decomposition $(\lambda_{k},\phi_{k})$:
\begin{equation}
\Delta_{X}\phi_{k}(x) = \lambda_{k}\phi_{k}(x),
\end{equation}
with countable eigenvalues $0 = \lambda_{0} \leq \lambda_{1} \leq \ldots$ growing to infinity. The corresponding eigenfunctions $\phi_{0},\phi_{1}\ldots$ are orthonormal for the standard inner product on $L^2(X)$:
\begin{equation}
\langle \phi_{i}, \phi_{j} \rangle = \delta_{ij},
\end{equation}
and form an orthonormal basis for $L^{2}(X)$. It follows that any function $f \in L^{2}(x)$ can be expressed as:
\begin{equation}
f(x) = \sum_{k \geq 0}\langle \phi_k, f \rangle \phi_{k}(x) = \sum_{k \geq 0} \hat{f}(\lambda_{k}) \phi_{k}(x),
\end{equation}
and $\hat{f}(\lambda_{k}) = \langle \phi_{k}, f \rangle
$ are the coefficients of the manifold Fourier transform of $f$.
The Convolution Theorem on manifolds is given by:
\begin{equation}\label{eq:convolution_iso}
(f \ast g) (x) = \sum_{k \geq 0} \hat{f}(\lambda_{k}) \hat{g}(\lambda_{k}) \phi_{k}(x) ,
\end{equation}
where $f*g$ is the convolution of signal $f$ with filter kernel $g$ and $\hat{f}(\lambda_{k}) \hat{g}(\lambda_{k})$ is the spectral filtering of $f$ by $g$.

\paragraph{Heat diffusion.}
The LBO governs the diffusion equation:
\begin{equation}\label{eq:heat}
\frac{\partial f(x,t)}{\partial t} = - \Delta_{X}f(x,t) , 
\end{equation}
where $f(x,t)$ is the temperature at point $x$ at time $t$. Given some initial heat distribution $f_{0}(x) = f(x,0)$ the solution of \cref{eq:heat} at time $t$ is given by the convolution:
\begin{equation}
f(x,t) = \int_{X}f_{0}(\xi) h_{t}(x,\xi) d\xi ,
\end{equation}
with the heat kernel $h_{t}(x,\xi)$. In the spectral domain it is expressed as:
\begin{equation}
h_{t}(x,\xi) = \sum_{k \geq 0} e^{-t\lambda_{k}}\phi_{k}(x)\phi_{k}(\xi) .
\end{equation}
\paragraph{Anistropic LBO.}
Extending the traditional LBO presented above, an anistropic LBO (ALBO) is created by changing the diffusion speed along chosen directions on the surface, typically the principal directions of curvature \cite{andreux2014}: 
\begin{equation}
\Delta_\alpha f(x) = -\diverg_{X}(D_\alpha(x) \nabla_{X}f(x)) ,
\end{equation}
where $D_\alpha(x)$ is a thermal conductivity tensor acting on the intrinsic gradient direction in the tangent plane and $\alpha$ is a scalar expressing the level of anisotropy. The ALBO can be extended to multiple anisotropic angles $\theta$ \cite{boscaini2016}:
\begin{equation}
\Delta_{\alpha \theta}f(x) = - \diverg_{X}(D_{\alpha \theta}(x) \nabla_{X}f(x)) ,
\end{equation}
by defining the thermal conductivity tensor:
\begin{equation}
D_{\alpha \theta}(x) = R_{\theta}D_{\alpha}(x)R_{\theta}^{\top} ,
\end{equation}
where $R_{\theta}$ is a rotation in the tangent plane (around surface normals) with angle $\theta$.

\paragraph{Anistropic convolution operator}
Similarly, \cref{eq:convolution_iso} is extended by introducing the anisotropic convolution operator \cite{li2020}:
\begin{equation}
(f \ast g)_{\alpha \theta}(x) = \int_{y\in X}f(y) g_{\alpha \theta, x}(y)dy ,
\end{equation}
where $g_{\alpha \theta, x}(y)$ is an oriented kernel function centered at point $x$ from the spectral domain

\begin{equation}
    \label{eq: anisotropic conv kernel spectral alpha theta}
    g_{\alpha \theta, x}(y) = \sum_{k \geq 0}\hat{g}(\lambda_{\alpha \theta,k}) \phi_{\alpha \theta, k}(x) \phi_{\alpha \theta, k}(y) .
\end{equation}
A straightforward derivation shows that:
\begin{equation}
    \label{eq: anisotropic conv alpha theta}
    (f \ast g)_{\alpha \theta}(x) = \sum_{k \geq 0} \hat{g}(\lambda_{\alpha \theta,k}) \hat{f}(\lambda_{\alpha \theta, k}) \phi_{\alpha \theta,k}(x) ,
\end{equation}
where $\hat{f}(\lambda_{\alpha \theta,k}) = \langle \phi_{\alpha \theta,k} , f\rangle$ are the coefficients of the anisotropic Fourier transform of $f$. To extract features along all directions, the anisotropic convolution operators are summed and then:
\begin{equation}
    \label{eq: anisotropic conv alpha}
    (f \ast g)_\alpha(x) = \int_{0}^{2 \pi} (f \ast g)_{\alpha \theta}(x) d\theta .
\end{equation}
The design of $\hat{g}(\lambda)$ is crucial when defining the anisotropic convolution operators. Following \cite{defferrard2016}, Chebychev polynomials are adopted to define polynomial filters $\hat{g}(\lambda)$ in \cite{li2020}.

\subsection{Randers metric}

Riemannian manifolds are equipped with a quadratic metric $\mathcal{R}_x(v) = \lVert v\rVert_{M(x)}$ that naturally induces a positive-definite inner product on the tangent space $T_xM$ at each point $x$. Finsler geometry is often said to be the same as Riemannian geometry without the quadratic constraint. A Finsler metric thus no longer induces an inner-product and is not required to be symmetric. Formally, a Finsler metric is given by the following properties.

\begin{definition}[Finsler metric]
    A Finsler metric $\mathcal{F}_x:T_xX\to \mathbb{R}_+$ at point $x$ is a smooth metric satisfying the triangle inequality, positive homogeneity, and positive definiteness axioms: 
    \begin{numcases}{}
        \mathcal{F}_x(v + v') \le \mathcal{F}_x(v) + \mathcal{F}_x(v'),\;& \hspace{-1em}$\forall v,v'\in T_xX$\\
        \mathcal{F}_x(\gamma v) = \gamma \mathcal{F}_x(v), \;& \hspace{-1em}$\forall (\gamma, v)\!\in\! \mathbb{R}_+\!\times\! T_xX$ \\
        \mathcal{F}_x(v) = 0 \iff v=0.
    \end{numcases}
\end{definition}

We consider Randers metrics, one of the simplest type of Finsler metrics generalizing the Riemannian one.

\begin{definition}[Randers metric] 
    Let $M : X \to S_{d}^{++}$ be a tensor field  of symmetric positive definite matrices and a vector field $\omega : X \to \mathbb{R}^{d}$ such that $\lVert \omega(x) \rVert _{M(x)^{-1}} < 1$ for all $x\in X$. The Randers metric associated to $(M, \omega)$ is given by
    \begin{equation}
        \label{eq: randers}
        \mathcal{F}_{x}(v) = \lVert v \rVert_{M(x)} + \langle \omega(x), v \rangle, \quad\forall x\in X, v \in T_{x}X.
    \end{equation}
\end{definition}

The Randers metric $\mathcal{F}$ is the sum of the Riemannian metric associated to $M$ and an asymmetric linear part governed by $\omega$. Notably when $\omega \equiv 0$, $\mathcal{F}$ is a traditional Riemannian metric. 
The condition $\lVert \omega(x) \rVert _{M(x)^{-1}} < 1$ ensures the positivity of the metric (see supplementary material).

From any metric we can construct a new metric associated to it, its dual, which plays a central role in differential geometry.

\begin{definition}[Dual Finsler metric]
    \label{def: dual finsler}
    The dual metric of a Finsler metric $\mathcal{F}_x$ is the metric $\mathcal{F}_x^*:T_xX\to\mathbb{R}_+$ defined by 
    \begin{equation}
        \mathcal{F}_x^{*}(v) = \max\{\langle v, b \rangle ; b \in \mathbb{R}^{d}, \mathcal{F}_x(b) \leq 1\}.
    \end{equation}
\end{definition}

When considering only Randers metrics, a subtype of Finsler metrics, the dual metric becomes explicit.

\begin{lemma}[Dual Randers metric]
Let $\mathcal{F}_x$ be a Randers metric given by $(M,\omega)$. The dual metric of $\mathcal{F}_x$ is also a Randers metric $\mathcal{F}_x^{*}$ associated to $(M^*, \omega^*)$ that satisfies
\begin{equation}
\label{eq: definition matrix M* w*}
\begin{pmatrix} \alpha M^{*} & \omega^{*} \\
{\omega^*}^{\top} & \frac{1}{\alpha} \\
\end{pmatrix} = \begin{pmatrix}
M & \omega \\
\omega ^{\top} & 1 \\
\end{pmatrix}^{-1},
\end{equation} 
where $\alpha = 1 - \langle \omega, M^{-1} \omega \rangle > 0$.
\end{lemma}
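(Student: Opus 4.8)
The plan is to compute the dual metric explicitly by solving the constrained maximization in \cref{def: dual finsler}, and then to match the closed form to a Randers structure. Fix a point $x$ and drop it from the notation. Because $\lVert \omega \rVert_{M^{-1}} < 1$, the Randers metric $\mathcal{F}$ is a genuine (asymmetric) norm that is strictly positive on $T_xX\setminus\{0\}$, so its unit ball $\{b : \mathcal{F}(b)\le 1\}$ is convex and compact. Hence for every $v\neq 0$ the maximum defining $\mathcal{F}^*(v)$ is attained, and by positive homogeneity of $\mathcal{F}$ it is attained on the boundary $\mathcal{F}(b)=1$.

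First I would solve $\max_b \langle v,b\rangle$ subject to $\lVert b\rVert_M + \langle \omega, b\rangle = 1$ using a Lagrange multiplier $\mu$. The stationarity condition reads $v = \mu\big(Mb/\lVert b\rVert_M + \omega\big)$, which I would invert to write $b$ proportional to $M^{-1}(v-\mu\omega)$. Taking the inner product of the stationarity equation with $b$, using $\langle Mb,b\rangle=\lVert b\rVert_M^2$ together with the constraint, gives the clean fact that the optimal value equals the multiplier, $\mathcal{F}^*(v)=\langle v,b\rangle=\mu$; substituting the expression for $b$ back into the constraint shows that $\mu$ itself solves the quadratic
\begin{equation}
\alpha\,\mu^2 + 2\langle \omega, M^{-1} v\rangle\,\mu - \langle v, M^{-1} v\rangle = 0,
\end{equation}
with $\alpha = 1 - \langle \omega, M^{-1}\omega\rangle$. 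Taking the positive root (the admissible one, since $\langle v,M^{-1}v\rangle>0$ makes the square-root term exceed $|\langle\omega,M^{-1}v\rangle|$) yields a closed form for $\mathcal{F}^*(v)$.

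Next I would read off the Randers structure from this root. It can be rearranged as $\mathcal{F}^*(v) = \sqrt{\langle v, M^* v\rangle} + \langle \omega^*, v\rangle$ with
\begin{equation}
\omega^* = -\frac{1}{\alpha} M^{-1}\omega, \qquad M^* = \frac{1}{\alpha} M^{-1} + \frac{1}{\alpha^2} M^{-1}\omega\,\omega^\top M^{-1}.
\end{equation}
It then remains to verify that $(M^*,\omega^*)$ are exactly the blocks claimed in \cref{eq: definition matrix M* w*}. I would do this with the Schur-complement block-inversion formula applied to the augmented matrix, whose Schur complement of $M$ is precisely $1-\langle\omega,M^{-1}\omega\rangle=\alpha$: its inverse has top-left block $M^{-1}+\frac{1}{\alpha} M^{-1}\omega\omega^\top M^{-1}=\alpha M^*$, off-diagonal block $-\frac{1}{\alpha} M^{-1}\omega=\omega^*$, and bottom-right entry $\frac{1}{\alpha}$, matching the statement entrywise.

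The main obstacle, and the step requiring the most care, is confirming that $\mathcal{F}^*$ is itself a legitimate Randers metric, i.e.\ that $M^*\in S_d^{++}$ and $\lVert\omega^*\rVert_{(M^*)^{-1}}<1$. The cleanest route is to observe that since $M$ is positive definite and its Schur complement $\alpha$ is positive, the augmented matrix is positive definite, hence so is its inverse; positive definiteness of the inverse forces its diagonal block $\alpha M^*$, and therefore $M^*$, to be positive definite, while the admissibility bound on $\omega^*$ follows from the same positivity. One should also justify that the chosen positive root is the global maximizer rather than a spurious critical point, which is immediate from the convexity and compactness established at the outset.
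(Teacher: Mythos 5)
Your proof is correct and its core is the same as the paper's: set up the constrained optimization defining the dual, apply a Lagrange multiplier, reduce to a quadratic equation for the multiplier, select the admissible root, and read off the Randers structure $(M^*,\omega^*)$. The differences are in the parametrization and the finishing steps, and they are worth noting. The paper works with the reciprocal problem $1/\mathcal{F}^*(v)=\min\{\mathcal{F}(b):\langle v,b\rangle=1\}$, so its multiplier equals $-1/\mathcal{F}^*(v)$; after solving the quadratic it must invert the root and rationalize the denominator (via $\tfrac{1}{x+\sqrt{y}}=\tfrac{x-\sqrt{y}}{x^2-y}$) to expose the Randers form. You instead maximize $\langle v,b\rangle$ on the unit sphere $\mathcal{F}(b)=1$, so the multiplier \emph{is} $\mathcal{F}^*(v)$ and the positive root of your (equivalent, related by $\mu\mapsto-1/\mu$) quadratic is already of the form $\sqrt{v^\top M^* v}+\langle\omega^*,v\rangle$ --- a small but genuine simplification. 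You also go further than the paper's proof in two respects: you explicitly verify the block identity \cref{eq: definition matrix M* w*} by Schur-complement inversion (the paper only derives the explicit formulas \cref{eq: M* formula,eq: omega* formula} and leaves the block form implicit), and you obtain both $M^*\succ 0$ and $\lVert\omega^*\rVert_{(M^*)^{-1}}<1$ from positive definiteness of the augmented matrix and its inverse, whereas the paper proves the latter bound in a separate subsection by a Sherman--Morrison computation. Your route buys a more unified and slightly shorter argument; the paper's buys a fully explicit formula for $\mathcal{F}^*(v)$ along the way. One minor imprecision: the quadratic for $\mu$ follows not from plugging $b$ back into the constraint $\mathcal{F}(b)=1$, but from taking the $M^{-1}$-norm of the rearranged stationarity equation $Mb/\lVert b\rVert_M=v/\mu-\omega$, whose left-hand side has unit $M^{-1}$-norm; you should state that step explicitly.
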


A proof is provided in the supplementary material.

From \cref{eq: definition matrix M* w*} we can derive: 
\begin{equation}
\label{eq: M* formula}
M^{*} = \frac{1}{\alpha^2}\Big(\alpha M^{-1} + (M^{-1}\omega)(M^{-1}\omega)^{\top}\Big),
\end{equation}
and 
\begin{equation}
\label{eq: omega* formula}
\omega^{*} = -\frac{1}{\alpha}M^{-1}\omega,.
\end{equation}

It is simple to show that $\lVert \omega^*\rVert_{{M^*(x)}^{-1}} <1$ (see supplementary material). The dual Randers metric $\mathcal{F}_x^*$ is the Randers metric on the manifold $X$ associated to the new Riemannian and linear components $(M^*,\omega^*)$:
\begin{equation}
    \label{eq: F* randers formula}
    \mathcal{F}_x^*(v) = \lVert v \rVert_{M^*(x)} + \langle w^*(x), v\rangle.
\end{equation}

\subsection{Finsler heat equation}

The classical heat equation in the Riemannian case is the gradient flow of the Dirichlet energy, and by analogy we can derive a Finsler heat equation as the gradient descent of the dual energy $\tfrac{1}{2}\mathcal{F}_x^*(v)^2$, in the following way \cite{bonnans2022linear,ohta2009}.

\begin{definition}[Finsler heat equation]
Given a manifold $X$ equipped with a Finsler metric $\mathcal{F}_x$, the heat equation on this Finsler manifold for smooth vector fields 
$u(x,t)$ with $(x,t)\in X\times \mathbb{R}_+$
is given by the gradient descent of the dual energy
\begin{equation}
\label{eq: finsler heat equation}
\partial_{t}u = \diverg(\mathcal{F}_x^*(\nabla u) \nabla \mathcal{F}_x^*(\nabla u)) .
\end{equation}

\end{definition}

\section{Finsler heat kernel}
\label{sec:finsler_heat_kernel}

We now analyse the Finsler heat equation to derive a novel Laplacian operator for shape matching. Our plan is the following:
\begin{itemize}
    \setlength\itemsep{0.7em}

    \item Analyse the heat equation on a Finsler manifold equipped with a Randers metric,

    \item From a specific case where the solution is explicit, we exhibit a simplified equation behaving like a regular heat equation with an external heat source,

    \item The solution to this new problem is explicit and governed by the homogeneous term, which is an anisotropic heat diffusion 
    with diffusivity involving
    the Randers metric,

    \item The homogeneous solution is given by a convolution with an anisotropic kernel 
    depending on the Randers metric,

    \item This heat kernel allows the construction of a new Finsler-Laplace-Beltrami operator (FLBO) that benefits from all the properties of ALBOs.
\end{itemize}

\subsection{Finsler heat diffusion}

The heat equation is well-known to be related to distances and geodesics in short time, and thus it is often analyzed when $t\to 0$. We look at the following specific case in short time. To provide a tractable solution, we also here assume that we slightly depart from Riemannian geometry by taking small $\lVert\omega\rVert$.

\begin{proposition}[Simplified Randers heat equation - distance trick]
    \label{prop: heat eqn simplified randers distance trick}
    If the initial solution $u_0 = u(x,0)$ of the heat equation 
    satisfies $\mathcal{F}_x^*(\nabla u_0) = 1$,
    then in short time the Finsler heat equation on a Randers metric associated to $(M,\omega)$ with small $\lVert\omega\rVert$ simplifies to
    \begin{equation}\label{eq: finsler_heat_equation}
        \partial_{t}u = \diverg\left((M^{*}-\omega^{*}{\omega^{*}}^{\top})\nabla u\right) + \diverg(\omega^{*}) \, .
    \end{equation}
\end{proposition}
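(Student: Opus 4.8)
The plan is to substitute the explicit form of the dual Randers metric into the Finsler heat flux and then tame the resulting expression with the two stated hypotheses. Writing $p=\nabla u$ and recalling from \cref{eq: F* randers formula} that $\mathcal{F}_x^*(p)=\lVert p\rVert_{M^*}+\langle\omega^*,p\rangle$, I would first compute the gradient of the dual norm with respect to its vector argument, $\nabla_p\mathcal{F}_x^*(p)=\frac{M^*p}{\lVert p\rVert_{M^*}}+\omega^*$, using the symmetry of $M^*$. The flux in \cref{eq: finsler heat equation} is $\mathcal{F}_x^*(p)\,\nabla_p\mathcal{F}_x^*(p)$, so expanding the product $\left(\lVert p\rVert_{M^*}+\langle\omega^*,p\rangle\right)\left(\frac{M^*p}{\lVert p\rVert_{M^*}}+\omega^*\right)$ yields four terms: $M^*p$, then $\lVert p\rVert_{M^*}\,\omega^*$, then $\frac{\langle\omega^*,p\rangle}{\lVert p\rVert_{M^*}}M^*p$, and finally $\langle\omega^*,p\rangle\,\omega^*=\omega^*{\omega^*}^\top p$.

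The second step is to invoke the distance trick. The short-time regime keeps $u$ close to a Finsler distance function, whose dual gradient is eikonal-normalized, $\mathcal{F}_x^*(\nabla u)=1$; this is precisely the hypothesis $\mathcal{F}_x^*(\nabla u_0)=1$ propagated in short time. Setting $\lVert p\rVert_{M^*}+\langle\omega^*,p\rangle=1$, the two terms linear in $\omega^*$ collapse neatly: the combination $\lVert p\rVert_{M^*}\,\omega^*+\langle\omega^*,p\rangle\,\omega^*=\left(\lVert p\rVert_{M^*}+\langle\omega^*,p\rangle\right)\omega^*=\omega^*$ reduces to a single advection-like vector $\omega^*$. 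The flux therefore becomes $M^*p+\omega^*+\frac{\langle\omega^*,p\rangle}{\lVert p\rVert_{M^*}}M^*p$, equivalently $\frac{M^*p}{\lVert p\rVert_{M^*}}+\omega^*$.

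The third step uses the small-$\lVert\omega\rVert$ assumption to linearize the normalization. From \cref{eq: M* formula,eq: omega* formula} one has $\alpha=1-\langle\omega,M^{-1}\omega\rangle=1+O(\lVert\omega\rVert^2)$ and $\omega^*{\omega^*}^\top=O(\lVert\omega\rVert^2)$, together with the exact algebraic identity $M^*-\omega^*{\omega^*}^\top=\frac1\alpha M^{-1}$ that follows by combining \cref{eq: M* formula,eq: omega* formula}. Under the distance trick $\lVert p\rVert_{M^*}=1-\langle\omega^*,p\rangle=1+O(\lVert\omega\rVert)$, so the normalized diffusion term $\frac{M^*p}{\lVert p\rVert_{M^*}}$ reduces to $M^*p$ at the retained order, and dropping the $O(\lVert\omega\rVert^2)$ contribution $\omega^*{\omega^*}^\top p$ lets me repackage $M^*p$ as $(M^*-\omega^*{\omega^*}^\top)p=\frac1\alpha M^{-1}p$. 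This is the key payoff: the rank-one correction $\omega^*{\omega^*}^\top$ restores a genuinely symmetric positive-definite diffusion tensor $\frac1\alpha M^{-1}$, so the reduced flux $(M^*-\omega^*{\omega^*}^\top)\nabla u+\omega^*$ generates a bona fide anisotropic heat equation. Applying $\diverg$ and splitting off the $u$-independent part by linearity then produces $\diverg\left((M^*-\omega^*{\omega^*}^\top)\nabla u\right)+\diverg(\omega^*)$, which is \cref{eq: finsler_heat_equation}.

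I expect the main obstacle to be the consistent control of orders in $\lVert\omega\rVert$ rather than the algebra itself. The cancellation into the single vector $\omega^*$ is exact under the distance trick, but the normalization factor $1/\lVert p\rVert_{M^*}$ leaves a residual of the \emph{same} first order in $\lVert\omega\rVert$ as the advection vector $\omega^*$ we retain; justifying that this residual is subordinate---structurally a higher-order correction to the diffusion tensor rather than a source term---is where the short-time distance-function modeling assumption does the real work, and it is the step I would scrutinize most carefully. A secondary point to make rigorous is the claim that the eikonal normalization $\mathcal{F}_x^*(\nabla u)=1$ is actually preserved by the flow in short time, which I would argue from the $t\to 0$ concentration of the heat kernel and the known link between heat diffusion and the Finsler distance.
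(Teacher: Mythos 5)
Your derivation is correct and lands on the same simplification as the paper, but the key step is handled by a genuinely different (and more elementary) argument. The paper first uses the eikonal condition to drop the scalar factor $\mathcal{F}_x^*(\nabla u)$, exactly as you do, but then invokes a level-set proportionality result (Corollary~2.10 of \cite{bonnans2022linear}) for the two functions $f_1(v)=\lVert v\rVert_{M^*}+{\omega^*}^\top v-1$ and $f_2(v)=\lVert v\rVert^2_{M^*-\omega^*{\omega^*}^\top}+2{\omega^*}^\top v-1$, which share a zero level set; this yields the exact identity $M^*\tfrac{\nabla u}{\lVert\nabla u\rVert_{M^*}}+\omega^*=\lambda\bigl((M^*-\omega^*{\omega^*}^\top)\nabla u+\omega^*\bigr)$ with $\lambda=\tfrac{2}{1+\nabla u^\top(M^*-\omega^*{\omega^*}^\top)\nabla u}$, and the small-$\lVert\omega\rVert$ assumption is used once, to set $\lambda\approx1$. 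Your route replaces that lemma by a direct four-term expansion of the flux: the two terms carrying $\omega^*$ collapse exactly to $\omega^*$ under the eikonal constraint, and the approximation enters through $\tfrac{M^*\nabla u}{\lVert\nabla u\rVert_{M^*}}\approx M^*\nabla u\approx(M^*-\omega^*{\omega^*}^\top)\nabla u$. The two approximations are quantitatively equivalent: the paper's $\lambda-1$ equals $\tfrac{\langle\omega^*,\nabla u\rangle}{\lVert\nabla u\rVert_{M^*}}$, so its discarded residual is the same first-order-in-$\lVert\omega\rVert$ quantity you flag. Your concern that this residual is of the same order as the retained drift $\omega^*$ is therefore not a gap relative to the paper --- the published proof has the identical issue, merely hidden inside $\lambda\approx1$, and the authors concede as much in their limitations section. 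Two small bonuses of your version: it avoids the external reference entirely, and the exact identity $M^*-\omega^*{\omega^*}^\top=\tfrac1\alpha M^{-1}$ (which follows from \cref{eq: M* formula,eq: omega* formula} and does not appear in the paper's proof) gives a cleaner explanation of why the resulting diffusivity is symmetric positive definite and reduces to $M^{-1}$ when $\omega\equiv0$.
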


\begin{proof}
    We provide the proof in the supplementary material. 
\end{proof}

The initial condition $\mathcal{F}_x^*(\nabla u_0) = 1$ holds when $-u_0$ is a distance function, as the Finsler Eikonal equation is given by $\mathcal{F}_x^*(-\nabla u) = 1$ \cite{mirebeau2014}. 
In practice, the solution to \cref{eq: finsler_heat_equation} is not our primary concern.
We only use this particular initialisation to exhibit a tractable solution that allows us to define a family of Finsler-based LBOs. 
We name the coefficient $D_{\mathcal{F}_x^*} = M^{*} - \omega^{*}{\omega^{*}}^{\top}$ as the Finlser diffusivity.
We can interpret \cref{eq: finsler_heat_equation} as a homogeneous heat equation:
\begin{equation}\label{eq: homogeneous finsler}
\partial_{t}u(x,t) = \diverg(D_{\mathcal{F}_x^*}\nabla u(x,t)),
\end{equation}
to which we add an external heat source $\diverg(\omega^{*}(x))$. 
Given some initial heat condition $u_{0}(x) = u(x,0)$, the solution of \cref{eq: finsler_heat_equation} is the sum of the solution of \cref{eq: homogeneous finsler},
\begin{equation}
u(x,t) = \int_{X} u_{0}(\xi)h_{t}(x,\xi)d\xi,
\end{equation}
where $h_t$ is the
heat kernel for \cref{eq: homogeneous finsler}, 
and of any particular solution, such as
\begin{equation}
u^{d}(x,t) = \int_{X} \diverg(\omega^{*}(\xi)) \hat{h}_{t}(x,\xi) d\xi \,,
\end{equation}
where $\hat{h}$ is the average over time of the heat kernel $h_{t}$:
\begin{equation}
\hat{h}_{t}(x,\xi) = \int_{s=0}^{t} h_{(t-s)}(x,\xi) ds \, .
\end{equation}

\subsection{Finsler-Laplace-Beltrami operator}

In Riemannian geometry, the heat kernel is by definition given by the convolution kernel of the solution of the homogeneous equation. An external source introduces a convolution with the time average of the heat kernel. By analogy, 
from the homogeneous \cref{eq: homogeneous finsler} we define the Finsler-Laplace-Beltrami operator (FLBO) as:
\begin{equation}
\Delta_{\mathit{FLBO}}u(x) = -\diverg_X(D_{\mathcal{F}_x^*} \nabla_X u(x)) \, .
\end{equation}

The FLBO is a special case of an ALBO on a Riemannian manifold, that was derived after analyzing the heat equation when the manifold is equipped with a Finsler metric instead. Unlike previous ALBOs, it is theoretically justified rather than empirically designed based solely on heuristics. Since the FLBO in an ALBO, it inherits all of its properties, such as the spectral decomposition of a compact positive semi-definite operator
\begin{equation}
    \Delta_{\mathit{FLBO}}\phi_k^{\mathit{FLBO}}(x) = \lambda_k^{\mathit{FLBO}}\phi_k^{\mathit{FLBO}}(x).
\end{equation}
The FLBO also differs from the Laplace-Finsler operator (LF) deriving from \cref{eq: finsler heat equation}:
\begin{equation}
    \Delta_F u(x) = -\diverg(\mathcal{F}_x^*(\nabla u)\nabla \mathcal{F}_x^*(\nabla u)),
\end{equation}
In particular, the LF operator acts on a Finsler manifold and not a Riemannian one like the FLBO. Thus, it cannot be used as a direct replacement of LBOs in most applications.

In the previous section we have shown that the solution of the simplified Randers heat diffusion equation exhibits the Finsler heat kernel $h_{t}(x,\xi)$ associated with the homogeneous Finsler heat diffusion equation. 
\begin{proposition}\label{prop_finsler_heat_kernel} 
In the spectral domain, the Finsler heat kernel can be expressed as
\begin{equation}
h_{t}^{FLBO}(x,\xi) = \sum_{k \geq 0} e^{-t \lambda_{k}^{FLBO}} \phi_{k}^{FLBO}(x) \phi_{k}^{FLBO}(\xi).
\end{equation}
\end{proposition}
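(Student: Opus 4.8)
The plan is to mirror the classical Riemannian heat kernel derivation, exploiting the fact established just above that the FLBO is a compact positive semi-definite operator whose eigenfunctions $\{\phi_k^{FLBO}\}_{k\geq 0}$ form an orthonormal basis of $L^2(X)$ with eigenvalues $0\leq\lambda_0^{FLBO}\leq\lambda_1^{FLBO}\leq\cdots$. Since the homogeneous \cref{eq: homogeneous finsler} can be rewritten as $\partial_t u = -\Delta_{\mathit{FLBO}} u$, the argument reduces to the standard spectral solution of a diffusion equation driven by a self-adjoint elliptic operator, with the only difference being that the diffusivity is now the Finsler diffusivity $D_{\mathcal{F}_x^*}$.

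First I would expand the solution at each time in the eigenbasis, writing $u(x,t)=\sum_{k\geq 0}c_k(t)\,\phi_k^{FLBO}(x)$ with $c_k(t)=\langle\phi_k^{FLBO},u(\cdot,t)\rangle$. Substituting this ansatz into $\partial_t u=-\Delta_{\mathit{FLBO}}u$ and using the eigenvalue relation $\Delta_{\mathit{FLBO}}\phi_k^{FLBO}=\lambda_k^{FLBO}\phi_k^{FLBO}$ decouples the problem into the scalar ODEs $\dot c_k(t)=-\lambda_k^{FLBO}c_k(t)$, whose solutions are $c_k(t)=c_k(0)\,e^{-t\lambda_k^{FLBO}}$. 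The initial condition $u(x,0)=u_0(x)$ then fixes $c_k(0)=\langle\phi_k^{FLBO},u_0\rangle=\int_X u_0(\xi)\phi_k^{FLBO}(\xi)\,d\xi$.

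Next I would assemble the solution and read off the kernel. Inserting the coefficients gives
\begin{equation}
u(x,t)=\sum_{k\geq 0}e^{-t\lambda_k^{FLBO}}\Big(\int_X u_0(\xi)\phi_k^{FLBO}(\xi)\,d\xi\Big)\phi_k^{FLBO}(x),
\end{equation}
and interchanging the sum with the spatial integral yields $u(x,t)=\int_X u_0(\xi)\,h_t^{FLBO}(x,\xi)\,d\xi$ with
\begin{equation}
h_t^{FLBO}(x,\xi)=\sum_{k\geq 0}e^{-t\lambda_k^{FLBO}}\phi_k^{FLBO}(x)\phi_k^{FLBO}(\xi),
\end{equation}
which is exactly the claimed expression. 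By uniqueness of the solution to the homogeneous Finsler heat equation for the given initial datum, this kernel is the Finsler heat kernel associated with \cref{eq: homogeneous finsler}.

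The main obstacle is analytic rather than algebraic: one must justify that the formal series genuinely converges to a solution and that the termwise operations (time differentiation, interchange of summation with $\Delta_{\mathit{FLBO}}$, and interchange of summation with the spatial integral) are legitimate. On a compact manifold this follows from the spectral theorem for the elliptic self-adjoint operator $\Delta_{\mathit{FLBO}}$, the decay factor $e^{-t\lambda_k^{FLBO}}$ ensuring for every $t>0$ that the series converges in $L^2$, and in fact smoothly, so that it defines a bona fide heat semigroup. Consequently the rigor is inherited wholesale from the Riemannian construction once $\Delta_{\mathit{FLBO}}$ is recognized as an admissible ALBO, which is precisely what was argued in the preceding paragraphs.
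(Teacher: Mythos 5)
Your proof is correct and follows exactly the reasoning the paper relies on: the paper gives no explicit proof of this proposition, simply invoking the fact that the FLBO is an ALBO (with positive-definite diffusivity $D_{\mathcal{F}_x^*}$, guaranteed by $\lVert\omega^*\rVert_{{M^*}^{-1}}<1$) and therefore inherits the standard spectral heat-kernel representation from the Riemannian case. Your eigenbasis expansion, decoupled ODEs, and reassembly of the kernel are precisely the classical argument being implicitly cited, so the two approaches coincide.
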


We have unified the Finsler heat diffusion equation and the anisotropic Laplace-Beltrami operator formulation. The link comes from the Finsler heat kernel that was exhibited in the explicit solution of the simplified equation \cref{eq: finsler_heat_equation} in a special case. We point out that the asymmetric component $\omega$ of the Randers metric influences the diffusivity coefficient and then the eigendecomposition of the Finsler heat kernel.

\section{Experiments}
\label{sec:experiments}

\subsection{Discretization}
We present a possible discretization of our FLBO given a manifold discretized by sampling $n$ vertices.
As the FLBO is a positive semi-definite operator, and by analogy with the LBO it can be approximated by an $n \times n$ sparse matrix $L_{\mathit{FLBO}} = - S_{\mathit{FLBO}}^{-1} W_{\mathit{FLBO}}$ with the mass matrix $S_{\mathit{FLBO}}$ and the stiffness matrix $W_{\mathit{FLBO}}$.
Our FLBO is not restricted to a particular discrete surface representation. Following \cite{boscaini2016,li2020}, we focus on triangular meshes to discretize the Finsler metrics, but representations for other discretizations can be easily deduced.
As in \cite{boscaini2016,li2020}, we introduce various angles $\theta$, each giving a Finsler metric $(M_\theta, \omega_\theta)$, to produce convolution operators from FLBOs sensitive to $\theta$, and then sum them up to get a final Finsler-based convolution operator incorporating features from all directions. We use the same notations as in \cite{boscaini2016,li2020}.

Our manifold is discretized as a triangular mesh with vertices $V$, edges $E$, and faces $F$. For each triangle $ijk\in F$, we define $U_{ijk} = (\hat{u}_{M}, \hat{u}_{m}, \hat{n})$ as an orthonormal reference frame associated to the face unit normal $\hat{n}$ and the directions of principal curvature $\hat{u}_{M}, \hat{u}_{m} \in \mathbb{R}^{3}$. 
We denote $\hat{e}_{ij}\in \mathbb{R}^3$ the unit vector along the embedding of the edge $(i,j)\in E$ pointing from $i$ to $j$, and $\alpha_{ij}$ and $\beta_{ij}$ the angles at $k$ and $h$ of the adjacent triangles $ijk\in F$ and $ijh\in F$ respectively. We write $R_\theta\in \mathbb{R}^3$ the rotation matrix of angle $\theta$ around the third basis vector (to be identified as $\hat{n}$).
The notations match those of \cref{fig: notations boscaini}.

\begin{figure}[ht]
  \centering
   \includegraphics[width=0.65\columnwidth]{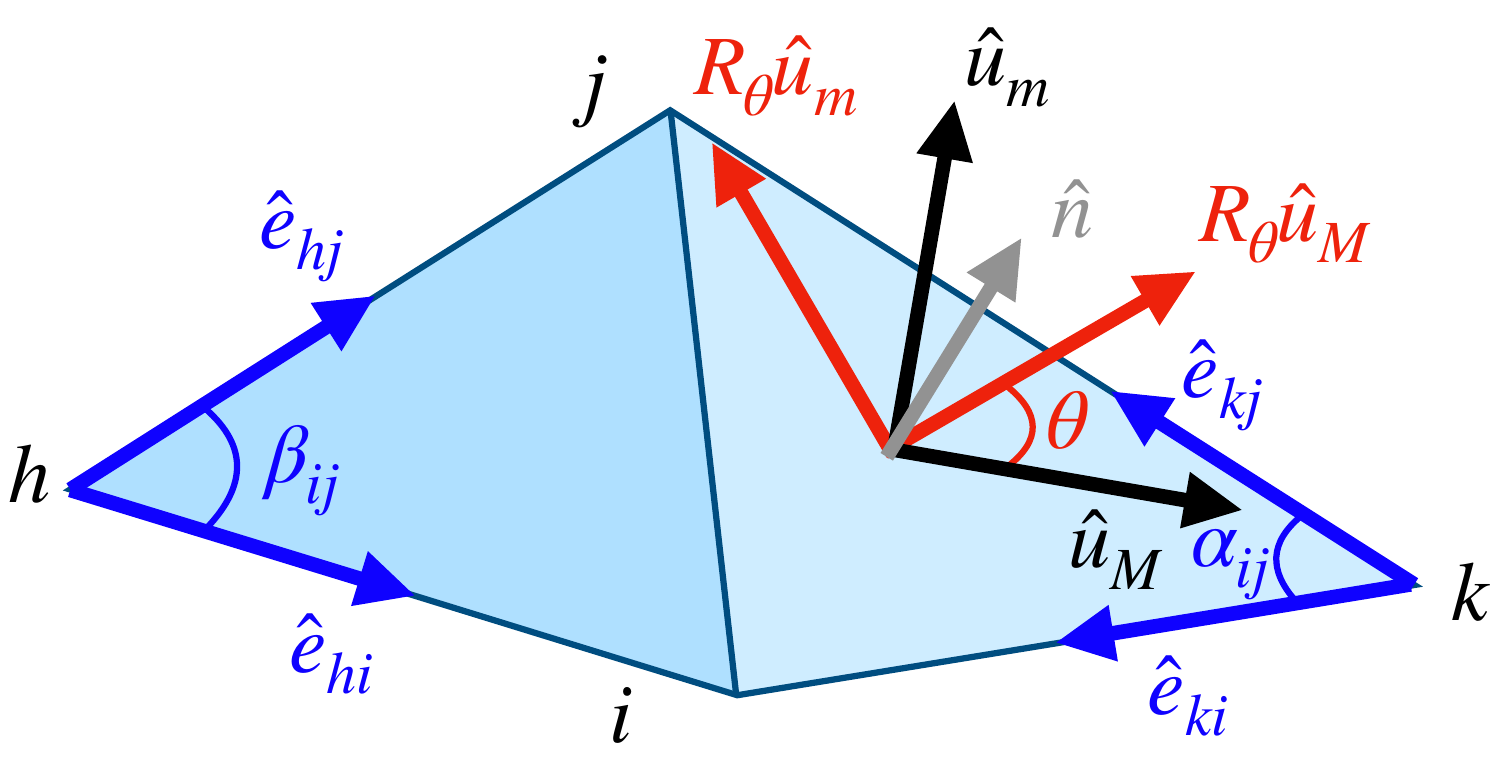}
   \caption{Notations for unit vectors and angles on the triangular mesh. The figure mimics the illustration in \cite{boscaini2016}.}
   \label{fig: notations boscaini}
\end{figure}

In all our experiments, the mass matrix $S_{\mathit{FLBO}}$ is computed as a diagonal matrix with diagonal element $i$ equal to one third of the areas of faces having $i$ as a vertex \cite{desbrun1999implicit}.

Our chosen Riemannian metric is also anisotropic using ideas from the ALBO community. When $\omega \equiv 0$ we obtain the traditional ALBO \cite{li2020}.
The diffusivity $D_\alpha(x)$ of ALBOs is usually created by introducing a scaling factor $\alpha>0$ into one of its eigenvalues by various heuristic formulae to break the isotropy along eigendirections \cite{boscaini2016,li2020}. We firstly build on the strategy of \cite{li2020} to create the uniform anisotropic Riemannian diffusivity
\begin{equation}
    D_\alpha = 
    \begin{pmatrix} 
        \frac{1}{1+\alpha} & \\
         & 1
    \end{pmatrix},
\end{equation}
with $\alpha = 10$ in our experiments as in \cite{li2020}.
For each angle $\theta$, we rotate this diffusivity to create the oriented diffusivity matrix $H_{\alpha\theta}$, also called shear matrix,
\begin{equation}
    \label{eq: H = R U D U^T R^T}
    H_{\alpha\theta} = R_{\theta} U_{ijk} D_\alpha U_{ijk}^{\top} R_{\theta}^{\top}.
\end{equation}
We can now define our anisotropic Riemannian metric at angle $\theta$. As we want our construction to generalise the Riemannian case $\omega\equiv0$, and since  $D_{\mathcal{F}_x^*}$ equals $M^{-1}(x)$ when $\omega\equiv0$, we choose
\begin{equation}
    \label{eq: M = H^-1}
    M_{\alpha\theta} = H_{\alpha\theta}^{-1}.
\end{equation}
Note that our scheme recovers the isotropic Riemannian metric  when $\alpha = 0$ as the shear matrix is then the identity. For each angle $\theta$, we introduce the Randers anisotropic component rotated by $\theta$ from the direction of maximal curvature
\begin{equation}
    \label{eq: omega = tau R u}
    \omega_{\theta} = \tau R_{\theta} \hat{u}_{M},
\end{equation}
where $\tau$ is a hyperparameter to ensure $\lVert \omega\rVert_{M_{\alpha\theta}^{-1}}<1$. We have thus constructed the Randers metric $(M_{\alpha\theta},\omega_\theta)$, from which we can easily compute its dual $(M_{\alpha\theta}^*,\omega_\theta^*)$ and its Finsler diffusivity $D_{\mathcal{F}_x^*}^{\alpha\theta}$. We then use the generalisation of the classical cotangent weight scheme for anisotropic diffusivities \cite{boscaini2016} to compute the weights of $W_{\mathit{FLBO}}^{\alpha\theta}$:
\begin{equation}
    \label{eq: weights w flbo theta}
    w_{ij} = 
    \begin{cases}
        \frac{1}{2} \left(\frac{ \langle \hat{e}_{kj}, \hat{e}_{ki} \rangle_{D_{\mathcal{F}_x^*}^{\alpha\theta}}}{\sin \alpha_{ij}} + \frac{ \langle \hat{e}_{hj}, \hat{e}_{hi} \rangle_{D_{\mathcal{F}_x^*}^{\alpha\theta}}}{\sin \beta_{ij}} \right) & (i,j) \in E, \\
        -\sum_{k\neq i} w_{ik} & i = j, \\
        0 & \text{else}.
    \end{cases}
\end{equation}
The FLBO at angle $\theta$ is finally given by $\Delta_{\mathit{FLBO}}^{\alpha\theta} = -S_{\mathit{FLBO}}^{-1}W_{\mathit{FLBO}}^{\alpha\theta}$. Note that if we had taken $\omega_\theta \equiv 0$, then we would have recovered the ALBO from \cite{li2020}.
See \cref{fig: overview theory teaser} for a illustrative summary of how we derive the FLBO.

\paragraph{Finsler-based anisotropic convolution operator}
Since our FLBOs are ALBOs, we can use the method of \cite{li2020} to create $\theta$-oriented convolution kernels $g_{\alpha\theta,x}$ according to \cref{eq: anisotropic conv kernel spectral alpha theta} by working in the spectral domain of $\Delta_{\mathit{FLBO}}^{\alpha\theta}$ and taking $\phi_{\alpha\theta,0}, \phi_{\alpha\theta,1} \ldots$ to be the eigenvectors of $\Delta_{\mathit{FLBO}}^{\alpha\theta}$.
Convolution $(f*g)_{\alpha\theta}$ of any function $f$ with this kernel is then computed in the spectral domain 
(\cref{eq: anisotropic conv alpha theta}) 
and then oriented convolutions are summed to get the final convolution operator $(f*g)_\alpha$ (\cref{eq: anisotropic conv alpha}). 
In our experiments, we use eight uniformly spaced samples of $\theta\in [0,\pi)$ as in \cite{li2020}.
Following \cite{li2020}, we encode the spectral coefficients $\hat{g}(\lambda_{\alpha\theta,k})$ of $g_{\alpha\theta,x}$ using Chebychev polynomials
\begin{equation}
    \hat{g}(\lambda_{\alpha\theta,k}) = \sum\limits_{s=0}^{S-1} c_{\alpha\theta,s} T_s(\lambda_{\alpha\theta,k}),
\end{equation}
where $S = 16$ in our tests as in \cite{li2020}, $T_s$ is the Chebychev polynomial of the first kind of order $s$, i.e. $T_s(\cos \gamma) = \cos (s\gamma)$ for any $\gamma$, and $c_{\alpha\theta,s}\in\mathbb{R}$ are coefficients to be learned. 

For visualisation purposes, we provide in the supplementary a plot of convolution filters $g_{\alpha\theta,x}$ for various $\theta$ when choosing the spectral coefficients of the kernel to be equal to one of the Chebychev polynomials, similarly to \cite{li2020}.

\subsection{An Application: Shape matching}

As a concrete example, we use our FLBO for shape matching, which is a common shape analysis application where LBOs play a central role. We compare the results to those based on traditional Riemannian (anisotropic) LBOs, namely FieldConv \cite{Mitchel_2021_ICCV} and ACSCNN \cite{li2020}. Note that \mbox{ACSCNN} coincides with our method when $\tau = 0$. 

\paragraph{Datasets and evaluataion.}
We work on four publicly available datasets: FAUST Remeshed \cite{bogo2014faust,ren2018continuous}, SCAPE Remeshed \cite{anguelov2005scape,ren2018continuous}, and SHREC'16 Partial Cuts \cite{cosmo2016shrec} and Holes \cite{cosmo2016shrec}. We follow the Princeton benchmark protocol \cite{kim2011blended}. 
We provide the percentage of matches that are at most $r$-geodesically distant from the groundtruth correspondence on the reference shape, with $r\in[0,1]$. 
\paragraph{Setting.}
Following \cite{li2020}, our chosen architecture is SplineCNN \cite{fey2018splinecnn}. We train each dataset for $100$ epochs with Adam optimizer and and we set the learning rate to $0.001$. We take as input SHOT descriptors \cite{salti2014shot}. In particular, we use the open-source implementation of ACSCNN\footnote{\url{https://github.com/GCVGroup/ACSCNN}}. The preprocessing step to get the Finsler-based filters is done with our custom implementation.

\subsection{Full mesh correspondence.}

\paragraph{FAUST (Remeshed)} dataset contains 100 human shapes corresponding to 10 different human beings remeshed with the LRVD \cite{yan2014low} to generate more challenging shapes with different mesh connectivity \cite{bogo2014faust, ren2018continuous}. It does not have one-to-one correspondence between shapes and they do not share the same number of vertices.
As suggested in \cite{donati2020deepGeoMaps}, we use the first 80 shapes as the training set and we test on the remaining 20 ones. 

\paragraph{SCAPE (Remeshed)} dataset contains 71 human shapes corresponding to the same human being in different positions. Each shape is remeshed as in FAUST Remeshed \cite{anguelov2005scape, ren2018continuous}. The first 51 shapes form the training set and we test on the other 20 shapes.

\begin{table}
    \centering
    \resizebox{1.0\columnwidth}{!}{
        \begin{tabular}{|l|l|l|l|}
            \hline
            \multicolumn{2}{|l|}{Method} & Accuracy (r = 0) & Accuracy (r = 0.01) \\
            \hline \hline
            \multicolumn{2}{|l|}{FieldConv \cite{Mitchel_2021_ICCV}} & 53.39 & 75.01 \\
            \hline
            \multicolumn{2}{|l|}{ACSCNN \cite{li2020}} & 61.95 & 82.46\\
            \hline
            \multirow{4}{*}{ours} & $\tau = 0.1$ & ${62.61}$   & ${82.90}$  \\ 
            & $\tau = 0.2$ & ${62.58}$ & ${82.90}$ \\ 
            & $\tau = 0.3$ & ${62.67}$  & ${82.94}$  \\ 
            & $\tau = 0.4$ & ${62.27}$ & ${82.66}$  \\
            & $\tau = 0.5$ & ${62.38}$ & ${82.68}$  \\
            \hline
        \end{tabular}
    }
    \caption{Accuracy on FAUST Remeshed.}
    \label{tab: faust remeshed}
\end{table}

\begin{table}
    \centering
    \resizebox{1.0\columnwidth}{!}{
        \begin{tabular}{|l|l|l|l|}
            \hline
            \multicolumn{2}{|l|}{Method} & Accuracy (r = 0) & Accuracy (r = 0.01) \\
            \hline \hline
            \multicolumn{2}{|l|}{FieldConv \cite{Mitchel_2021_ICCV}} & 37.41 & 61.37 \\
            \hline
            \multicolumn{2}{|l|}{ACSCNN \cite{li2020}} & 46.13 & 71.70 \\
            \hline
            \multirow{4}{*}{ours} & $\tau = 0.1$ & ${46.94}$   & ${72.44}$  \\ 
            & $\tau = 0.2$ & ${46.27}$ & ${71.91}$  \\ 
            & $\tau = 0.3$ & 45.91  &  ${71.92}$  \\ 
            & $\tau = 0.4$ & 45.82 &  71.44 \\
            & $\tau = 0.5$ & 45.26 &  71.27 \\
            \hline
        \end{tabular}
    }
    \caption{Accuracy on SCAPE Remeshed.}
    \label{tab: scape remeshed}
\end{table}

\subsection{Partial correspondence.}

\textbf{SHREC'16 Partial Cuts} and \textbf{SHREC'16 Partial Holes} datasets contain respectively 135 and 90 shapes of human and animals divided in 8 categories. The shapes are deformed following two kinds of partiality, respectively cuts and holes. For each category in Cuts, we use the first 10 shapes as a training dataset and we test on the last 5 shapes. For each category in Holes, we use the first 6 shapes as a training dataset and we test on the last 4 shapes.

\subsection{Results}

We present quantitative results in \cref{tab: faust remeshed,tab: scape remeshed,tab: shrec cuts,tab: shrec holes} for FAUST Remeshed, SCAPE Remeshed, SHREC'16 Partial Cuts, and SHREC'16 Partial Holes respectively. For the first two datasets we provide the accuracy within $r\in\{0, 0.01\}$ geodesic radius of the groundtruth, whereas we only display the performance for $r=0$ in the last two datasets due to space constraints as each of these datasets are decomposed into 8 subdatasets. Clearly, FieldConv struggles compared to the more advanced ACSCNN. Our method 
systematically outperforms, or is on par with,
ACSCNN for a default choice of $\tau = 0.1$. This demonstrates that the FLBO can be used in complement to advanced approaches to boost performance, requiring only a small additional preprocessing overhead for computing the metric and the FLBO, which is negligible compared to neural network training times. We test our method with other values of $\tau$ and find that the results remain consistent, proving that our approach is not highly sensitive to finely tuning this hyperparameter. In the literature, providing the results as a plot with respect to all $r\in [0,1]$ is sometimes performed. However, since our results are close to those of ACSCNN, the curves tend to superimpose and do not provide any additional insight, so we do not provide them here.

\begin{table}
    \centering
    \resizebox{\columnwidth}{!}{
        \begin{tabular}{|l|l|l|l|l|l|l|l|l|l|}
            \hline
            \multicolumn{2}{|l|}{Method} & Cat & Centaur & David & Dog & Horse & Michael & Victoria & Wolf \\
            \hline \hline
            \multicolumn{2}{|l|}{FieldConv \cite{Mitchel_2021_ICCV}} & 0.054 & 1.18 & 0 & 0.094 & 1.93 & 0.33 & 0.010 & 45.81\\
            \hline
            \multicolumn{2}{|l|}{ACSCNN \cite{li2020}} & 38.10 & 74.99 & ${23.78}$ & 59.87 & 44.37 & 12.82 & ${31.23}$ & 93.75 \\
            \hline
            \multirow{4}{*}{ours} & $\tau = 0.1$ & ${40.13}$ & ${75.41}$ & 22.34 & 59.31 & 42.72 & ${16.04}$ & 27.95 & ${94.02}$ \\ 
            & $\tau = 0.2$ & $37.72$ & 72.83 & 22.48 & ${60.02}$ & 43.17 & ${14.30}$ & 29.72 & ${94.14}$ \\ 
            & $\tau = 0.3$ & ${40.26}$ & 73.40 & 21.59 & 57.61 & 43.49 & ${13.15}$ & 31.10 & ${94.17}$\\ 
            & $\tau = 0.4$ & ${39.63}$ & 72.29 & 21.51 & 57.12 & ${44.83}$ & ${13.55}$ & 29.35 & 93.33 \\
            & $\tau = 0.5$ & ${40.94}$ & 73.16 & 20.49 & ${60.17}$ & 43.85 & ${15.04}$ & 28.46 & ${94.06}$ \\
            \hline
        \end{tabular}
    }
    \caption{Accuracy on SHREC'16 Partial Cuts.}
    \label{tab: shrec cuts}
\end{table}

\begin{table}[t]
    \centering
    \resizebox{\columnwidth}{!}{
        \begin{tabular}{|l|l|l|l|l|l|l|l|l|l|}
            \hline
            \multicolumn{2}{|l|}{Method} & Cat & Centaur & David & Dog & Horse & Michael & Victoria & Wolf \\
            \hline \hline
            \multicolumn{2}{|l|}{FieldConv \cite{Mitchel_2021_ICCV}} & N.A. & 0.038 & N.A. & 0.11 & N.A. & 0.11 & 0.049 & 0.39 \\
            \hline
            \multicolumn{2}{|l|}{ACSCNN \cite{li2020}} & 19.98 & 32.80 & 16.09 & 41.70 & 46.85 & 12.29 & 11.32 & 84.53\\
            \hline
            \multirow{4}{*}{ours} & $\tau = 0.1$ & 19.95 & ${35.85}$ & ${17.96}$ & ${42.68}$ & ${47.87}$ & 12.00 & ${11.70}$ & ${84.64}$ \\ 
            & $\tau = 0.2$ & 17.99 & ${33.10}$ & ${16.40}$ & ${42.84}$ & ${49.77}$ & ${13.90}$ & 10.02 & ${85.23}$ \\ 
            & $\tau = 0.3$ & ${21.27}$ & ${35.48}$ & ${18.43}$ & ${43.29}$ & 45.07 & 11.31 & ${11.52}$ & ${85.74}$ \\ 
            & $\tau = 0.4$ & ${20.08}$ & 32.54 & ${16.41}$ & ${42.98}$ & 45.71 & ${13.16}$ & ${11.41}$ & ${85.72}$  \\
            & $\tau = 0.5$ & 18.62 & ${35.04}$ & ${16.28}$ & ${42.68}$ & 45.38 & 11.89 & 10.19 & 84.23 \\
            \hline
        \end{tabular}
    }
    \caption{Accuracy on SHREC16' Partial Holes. N.A. indicates a failure to process the dataset.}
    \label{tab: shrec holes}
\end{table}

We plot some qualitative correspondence results in \cref{fig:results shapes 1 per dataset}.  We find that our method is able to provide a convincing mapping between shapes undergoing various highly non-rigid deformations. It is also able to handle missing parts such as cuts and holes, although performance naturally deteriorates as the challenge is significantly harder.

\begin{figure}[ht]%
    \centering
    \includegraphics[width=1.0\columnwidth]{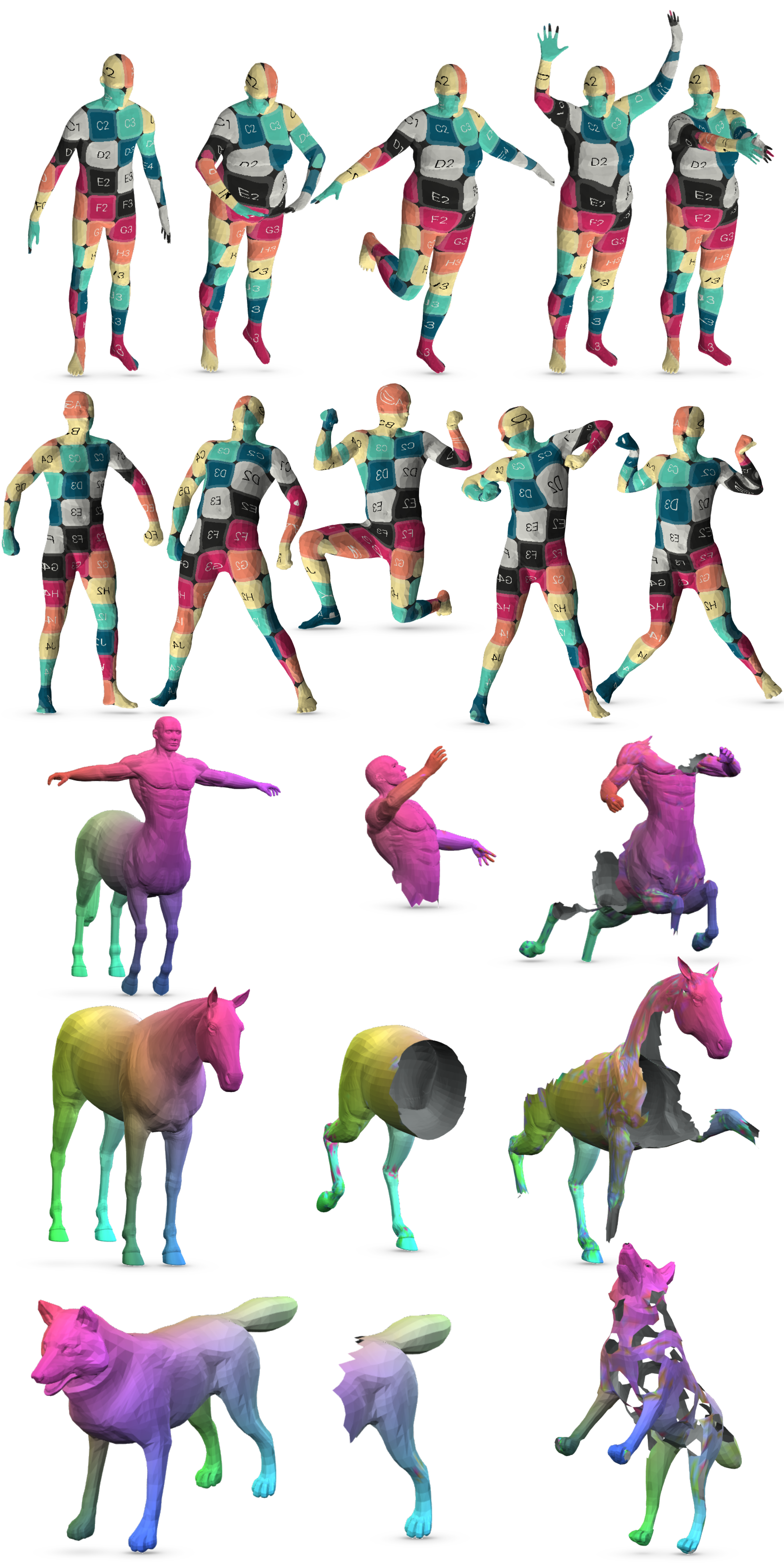}
    \caption{
        Visual shape correspondence results on the FAUST Remeshed (row 1), SCAPE remeshed (row 2), and SHREC'16 Partial Cuts (columns 1 and 2 in rows 3 to 5), and SHREC'16 Partial Holes (columns 1 and 3 in rows 3 to 5) datasets. The source shape is on the left, on which we perform dense correspondence estimation on the shapes to the right. See the supplementary material for more visual results on other shapes.
    }
    \vspace{-1em}
\label{fig:results shapes 1 per dataset}
\end{figure}

\section{Conclusion}

We revisited Finsler manifolds, a generalization of Riemannian manifolds that naturally allow for anisotropies. By exploring their heat diffusion equation we derived a novel Finsler heat kernel and a Finsler-Laplace-Beltrami operator (FLBO). This FLBO generalizes traditional heuristic anisotropic LBOs while being theoretically motivated. It can be used in place of LBOs or ALBOs and is compatible with various advanced techniques such as spatial filtering or geometric deep learning. We tested the pertinence of the FLBO in shape correspondence experiments and found that our approach can easily complement state-of-the-art geometric deep learning methods using ALBOs. We hope that the concise and self-contained review of Finsler geometry and the derivation of a tractable Finsler heat equation and FLBO will encourage further exploration of the Finsler geometry in the field of computer vision.

\vspace*{-0.5em}
\paragraph*{Limitations and future works.}
We have used some strong assumptions to derive the Finsler heat equation. While we acknowledge this limitation, it is necessary, in our work, to exhibit some tractable operators. However, we believe that a further theoretical focus on this stage is an exciting and promising research direction. Our experiments are also built on a straightforward discretization. Although it is not a major drawback, as our goal is to show that traditional LBOs can be generalized to Finsler-based LBOs, an extension of our example to other choices of $(M,\omega)$ would also be crucial. Finally, beyond our simple application to shape matching, we do believe that the importance of Finsler manifolds in computer vision is largely uncharted and requires further stimulating works.

\vspace*{-0.5em}
\paragraph{Acknowledgement.} 
This work was supported by the ERC Advanced Grant SIMULACRON.

{
    \small
    \bibliographystyle{ieeenat_fullname}
    \bibliography{main}

\begin{thebibliography}{46}
\providecommand{\natexlab}[1]{#1}
\providecommand{\url}[1]{\texttt{#1}}
\expandafter\ifx\csname urlstyle\endcsname\relax
  \providecommand{\doi}[1]{doi: #1}\else
  \providecommand{\doi}{doi: \begingroup \urlstyle{rm}\Url}\fi

\bibitem[Andreux et~al.(2014)Andreux, Rodola, Aubry, and Cremers]{andreux2014}
Mathieu Andreux, Emanuele Rodola, Mathieu Aubry, and Daniel Cremers.
\newblock Anisotropic {L}aplace-{B}eltrami operators for shape analysis.
\newblock In \emph{Sixth Workshop on Non-Rigid Shape Analysis and Deformable Image Alignment (NORDIA)}, 2014.

\bibitem[Anguelov et~al.(2005)Anguelov, Srinivasan, Koller, Thrun, Rodgers, and Davis]{anguelov2005scape}
Dragomir Anguelov, Praveen Srinivasan, Daphne Koller, Sebastian Thrun, Jim Rodgers, and James Davis.
\newblock Scape: shape completion and animation of people.
\newblock In \emph{ACM Transactions on Graphics (TOG)}, pages 408--416. ACM, 2005.

\bibitem[Attaiki et~al.(2021)Attaiki, Pai, and Ovsjanikov]{attaiki2021dpfm}
Souhaib Attaiki, Gautam Pai, and Maks Ovsjanikov.
\newblock Dpfm: Deep partial functional maps.
\newblock In \emph{2021 International Conference on 3D Vision (3DV)}, pages 175--185. IEEE, 2021.

\bibitem[Bogo et~al.(2014)Bogo, Romero, Loper, and Black]{bogo2014faust}
Federica Bogo, Javier Romero, Matthew Loper, and Michael~J Black.
\newblock Faust: Dataset and evaluation for 3d mesh registration.
\newblock In \emph{CVPR}, pages 3794--3801, 2014.

\bibitem[Bonnans et~al.(2022)Bonnans, Bonnet, and Mirebeau]{bonnans2022linear}
J~Frederic Bonnans, Guillaume Bonnet, and Jean-Marie Mirebeau.
\newblock A linear finite-difference scheme for approximating {R}anders distances on cartesian grids.
\newblock \emph{ESAIM: Control, Optimisation and Calculus of Variations}, 28:\penalty0 45, 2022.

\bibitem[Boscaini et~al.(2016{\natexlab{a}})Boscaini, Masci, Rodola, and Bronstein]{boscaini2016}
David Boscaini, Jonathan Masci, Emanuele Rodola, and Michael Bronstein.
\newblock Learning shape correspondence with anisotropic convolutional neural networks.
\newblock In \emph{Advances in Neural Information Processing Systems}, pages 3189--3197, 2016{\natexlab{a}}.

\bibitem[Boscaini et~al.(2016{\natexlab{b}})Boscaini, Masci, Rodola, Bronstein, and Cremers]{boscaini2016cremers}
David Boscaini, Jonathan Masci, Emanuele Rodola, Michael Bronstein, and Daniel Cremers.
\newblock Anisotropic diffusion descriptors.
\newblock In \emph{Computer Graphics Forum}, pages 431--441, 2016{\natexlab{b}}.

\bibitem[Bracha et~al.(2020)Bracha, Halim, and Kimmel]{bracha2020shape}
Amit Bracha, Oshri Halim, and Ron Kimmel.
\newblock {Shape Correspondence by Aligning Scale-invariant LBO Eigenfunctions}.
\newblock In \emph{Eurographics Workshop on 3D Object Retrieval}. The Eurographics Association, 2020.

\bibitem[Bracha et~al.(2023)Bracha, Dag{\`e}s, and Kimmel]{bracha2023partial}
Amit Bracha, Thomas Dag{\`e}s, and Ron Kimmel.
\newblock On partial shape correspondence and functional maps.
\newblock \emph{arXiv preprint arXiv:2310.14692}, 2023.

\bibitem[Bronstein et~al.(2008)Bronstein, Bronstein, and Kimmel]{bronstein2008numerical}
Alexander~M Bronstein, Michael~M Bronstein, and Ron Kimmel.
\newblock \emph{Numerical geometry of non-rigid shapes}.
\newblock Springer Science \& Business Media, 2008.

\bibitem[Cao et~al.(2023)Cao, Roetzer, and Bernard]{cao2023unsupervised}
Dongliang Cao, Paul Roetzer, and Florian Bernard.
\newblock Unsupervised learning of robust spectral shape matching.
\newblock \emph{arXiv preprint arXiv:2304.14419}, 2023.

\bibitem[Cartan(1933)]{cartan1933espaces}
Élie Cartan.
\newblock \emph{Les espaces m{\'e}triques fond{\'e}s sur la notion d'aire}.
\newblock Hermann, 1933.

\bibitem[Chen et~al.(2016{\natexlab{a}})Chen, Mirebeau, and Cohen]{chen2016geoevo}
Da Chen, Jean-Marie Mirebeau, and Laurent~D. Cohen.
\newblock {F}insler geodesics evolution model for region based active contours.
\newblock In \emph{Proc. of the British Machine Vision Conference (BMVC)}, 2016{\natexlab{a}}.

\bibitem[Chen et~al.(2016{\natexlab{b}})Chen, Mirebeau, and Cohen]{chen2016minpath}
Da Chen, Jean-Marie Mirebeau, and Laurent~D. Cohen.
\newblock A new {F}insler minimal path model with curvature penalization for image segmentation and closed contour detection.
\newblock In \emph{IEEE Conference on Computer Vision and Pattern Recognition (CVPR)}, pages 355--363, 2016{\natexlab{b}}.

\bibitem[Chen et~al.(2017)Chen, Mirebeau, and Cohen]{chen2017elastica}
Da Chen, Jean-Marie Mirebeau, and Laurent~D. Cohen.
\newblock Global minimum for a {F}insler elastica minimal path approach.
\newblock pages 458--483, 2017.

\bibitem[Chen and Koltun(2015)]{chen2015robust}
Qifeng Chen and Vladlen Koltun.
\newblock Robust nonrigid registration by convex optimization.
\newblock In \emph{Proceedings of the IEEE International Conference on Computer Vision}, pages 2039--2047, 2015.

\bibitem[Cosmo et~al.(2016)Cosmo, Rodola, Bronstein, Torsello, Cremers, Sahillioglu, et~al.]{cosmo2016shrec}
Luca Cosmo, Emanuele Rodola, Michael~M. Bronstein, Andrea Torsello, Daniel Cremers, Y Sahillioglu, et~al.
\newblock Shrec’16: Partial matching of deformable shapes.
\newblock \emph{Proc. 3DOR}, 2\penalty0 (9):\penalty0 12, 2016.

\bibitem[Defferrard et~al.(2016)Defferrard, Bresson, and Vandergheynst]{defferrard2016}
Michaël Defferrard, Xavier Bresson, and Pierre Vandergheynst.
\newblock Convolutional neural networks on graphs with fast localized spectral filtering.
\newblock In \emph{Advances in Neural Information Processing Systems (NeurIPS)}, pages 3844--3852, 2016.

\bibitem[Desbrun et~al.(1999)Desbrun, Meyer, Schröder, and Barr]{desbrun1999implicit}
Mathieu Desbrun, Mark Meyer, Peter Schröder, and Alan~H. Barr.
\newblock Implicit fairing of irregular meshes using diffusion and curvature flow.
\newblock In \emph{Proceedings of the ACM Siggraph’99}, pages 317--24, 1999.

\bibitem[Donati et~al.(2020)Donati, Sharma, and Ovsjanikov]{donati2020deepGeoMaps}
Nicolas Donati, Abhishek Sharma, and Maks Ovsjanikov.
\newblock Deep geometric functional maps: Robust feature learning for shape correspondence.
\newblock In \emph{Proceedings of the IEEE/CVF Conference on Computer Vision and Pattern Recognition}, pages 8592--8601, 2020.

\bibitem[Fey et~al.(2018)Fey, Lenssen, Weichert, and M{\"u}ller]{fey2018splinecnn}
Matthias Fey, Jan~Eric Lenssen, Frank Weichert, and Heinrich M{\"u}ller.
\newblock Splinecnn: Fast geometric deep learning with continuous {B}-spline kernels.
\newblock In \emph{Proceedings of the IEEE Conference on Computer Vision and Pattern Recognition}, pages 869--877, 2018.

\bibitem[Finsler(1918)]{finsler1918ueber}
Paul Finsler.
\newblock \emph{Über Kurven und Fl{\"a}chen in allgemeinen R{\"a}umen}.
\newblock Philosophische Fakultät, Georg-August-Univ., 1918.

\bibitem[Goodfellow et~al.(2016)Goodfellow, Bengio, and Courville]{goodfellow2016deep}
Ian Goodfellow, Yoshua Bengio, and Aaron Courville.
\newblock \emph{Deep learning}.
\newblock MIT press, 2016.

\bibitem[Halimi et~al.(2019)Halimi, Litany, Rodola, Bronstein, and Kimmel]{halimi2019unsupervised}
Oshri Halimi, Or Litany, Emanuele Rodola, Alex~M. Bronstein, and Ron Kimmel.
\newblock Unsupervised learning of dense shape correspondence.
\newblock In \emph{Proceedings of the IEEE/CVF Conference on Computer Vision and Pattern Recognition}, pages 4370--4379, 2019.

\bibitem[He et~al.(2016)He, Zhang, Ren, and Sun]{he2016resnet}
Kaiming He, Xiangyu Zhang, Shaoqing Ren, and Jian Sun.
\newblock Deep residual learning for image recognition.
\newblock In \emph{Proceedings of the IEEE Conference on Computer Vision and Pattern Recognition}, pages 770--778, 2016.

\bibitem[Kim et~al.(2011)Kim, Lipman, and Funkhouser]{kim2011blended}
Vladimir~G Kim, Yaron Lipman, and Thomas Funkhouser.
\newblock Blended intrinsic maps.
\newblock \emph{ACM transactions on graphics (TOG)}, 30\penalty0 (4):\penalty0 1--12, 2011.

\bibitem[Krizhevsky et~al.(2012)Krizhevsky, Sutskever, and Hinton]{krizhevsky2012alexnet}
Alex Krizhevsky, Ilya Sutskever, and Geoffrey~E Hinton.
\newblock Imagenet classification with deep convolutional neural networks.
\newblock \emph{Advances in neural information processing systems}, 25, 2012.

\bibitem[LeCun et~al.(1998)LeCun, Bottou, Bengio, and Haffner]{lecun1998lenet}
Yann LeCun, L{\'e}on Bottou, Yoshua Bengio, and Patrick Haffner.
\newblock Gradient-based learning applied to document recognition.
\newblock \emph{Proceedings of the IEEE}, 86\penalty0 (11):\penalty0 2278--2324, 1998.

\bibitem[Li et~al.(2020)Li, Liu, Hu, and Liu]{li2020}
Qinsong Li, Shengjun Liu, Ling Hu, and Xinru Liu.
\newblock Shape correspondence using anisotropic {C}hebyshev spectral cnns.
\newblock In \emph{CVPR}, pages 14658--14667, 2020.

\bibitem[Litany et~al.(2017)Litany, Remez, Rodola, Bronstein, and Bronstein]{litany2017dfm}
Or Litany, Tal Remez, Emanuele Rodola, Alex~M. Bronstein, and Michael~M. Bronstein.
\newblock Deep functional maps: Structured prediction for dense shape correspondence.
\newblock In \emph{Proceedings of the IEEE international conference on computer vision}, pages 5659--5667, 2017.

\bibitem[Litman and Bronstein(2013)]{litman2013learning}
Roee Litman and Alexander~M Bronstein.
\newblock Learning spectral descriptors for deformable shape correspondence.
\newblock \emph{IEEE transactions on pattern analysis and machine intelligence}, 36\penalty0 (1):\penalty0 171--180, 2013.

\bibitem[Mirebeau(2014)]{mirebeau2014}
Jean-Marie Mirebeau.
\newblock Efficient fast marching with {F}insler metric.
\newblock pages 515--557, 2014.

\bibitem[Mitchel et~al.(2021)Mitchel, Kim, and Kazhdan]{Mitchel_2021_ICCV}
Thomas~W. Mitchel, Vladimir~G. Kim, and Michael Kazhdan.
\newblock Field convolutions for surface cnns.
\newblock In \emph{Proceedings of the IEEE/CVF International Conference on Computer Vision (ICCV)}, pages 10001--10011, 2021.

\bibitem[Otha and Sturm(2009)]{ohta2009}
Shin-ishi Otha and Karl-Theodor Sturm.
\newblock Heat flow on {F}insler manifolds.
\newblock \emph{Commun. Pure Appl. Math.}, 62\penalty0 (10):\penalty0 1386--1433, 2009.

\bibitem[Ovsjanikov et~al.(2012)Ovsjanikov, Ben-Chen, Solomon, Butscher, and Guibas]{ovsjanikov2012functional}
Maks Ovsjanikov, Mirela Ben-Chen, Justin Solomon, Adrian Butscher, and Leonidas Guibas.
\newblock Functional maps: a flexible representation of maps between shapes.
\newblock \emph{ACM Transactions on Graphics (ToG)}, 31\penalty0 (4):\penalty0 1--11, 2012.

\bibitem[Ovsjanikov et~al.(2016)Ovsjanikov, Corman, Bronstein, Rodol{\`a}, Ben-Chen, Guibas, Chazal, and Bronstein]{ovsjanikov2016computing}
Maks Ovsjanikov, Etienne Corman, Michael Bronstein, Emanuele Rodol{\`a}, Mirela Ben-Chen, Leonidas Guibas, Frederic Chazal, and Alex Bronstein.
\newblock Computing and processing correspondences with functional maps.
\newblock In \emph{SIGGRAPH ASIA 2016 Courses}, pages 1--60. 2016.

\bibitem[Ratliff et~al.(2021)Ratliff, Van~Wyk, Xie, Li, and Asif~Rana]{ratliff2021}
Nathan~D. Ratliff, Karl Van~Wyk, Mandy Xie, Anqi Li, and Muhammad Asif~Rana.
\newblock Generalized nonlinear and {F}insler geometry for robotics.
\newblock In \emph{IEEE International Conference on Robotics and Automation (ICRA)}, pages 10206--10212, 2021.

\bibitem[Ren et~al.(2018)Ren, Poulenard, Wonka, and Ovsjanikov]{ren2018continuous}
Jing Ren, Adrien Poulenard, Peter Wonka, and Maks Ovsjanikov.
\newblock Continuous and orientation-preserving correspondences via functional maps.
\newblock \emph{ACM Transactions on Graphics (ToG)}, 37\penalty0 (6):\penalty0 1--16, 2018.

\bibitem[Riemann(1854)]{riemann1854uber}
Bernhard Riemann.
\newblock Über die {H}ypothesen, welche der {G}eometrie zu {G}runde liegen.
\newblock \emph{Abhandlungen der königlichen Gesellschaft der Wissenschaften zu Göttingen}, 13:\penalty0 304--319, 1854.

\bibitem[Salti et~al.(2014)Salti, Tombari, and Di~Stefano]{salti2014shot}
Samuele Salti, Federico Tombari, and Luigi Di~Stefano.
\newblock Shot: Unique signatures of histograms for surface and texture description.
\newblock \emph{Computer Vision and Image Understanding}, 125:\penalty0 251--264, 2014.

\bibitem[Sharma and Ovsjanikov(2020)]{sharma2020weakly}
Abhishek Sharma and Maks Ovsjanikov.
\newblock Weakly supervised deep functional maps for shape matching.
\newblock \emph{Advances in Neural Information Processing Systems}, 33:\penalty0 19264--19275, 2020.

\bibitem[Simonyan and Zisserman(2014)]{simonyan2014vgg}
Karen Simonyan and Andrew Zisserman.
\newblock Very deep convolutional networks for large-scale image recognition.
\newblock \emph{arXiv preprint arXiv:1409.1556}, 2014.

\bibitem[Szegedy et~al.(2016)Szegedy, Vanhoucke, Ioffe, Shlens, and Wojna]{szegedy2016inception}
Christian Szegedy, Vincent Vanhoucke, Sergey Ioffe, Jon Shlens, and Zbigniew Wojna.
\newblock Rethinking the inception architecture for computer vision.
\newblock In \emph{Proceedings of the IEEE Conference on Computer Vision and Pattern Recognition}, pages 2818--2826, 2016.

\bibitem[Vestner et~al.(2017)Vestner, L{\"a}hner, Boyarski, Litany, Slossberg, Remez, Rodola, Bronstein, Bronstein, Kimmel, and Cremers]{vestner2017efficient}
Matthias Vestner, Zorah L{\"a}hner, Amit Boyarski, Or Litany, Ron Slossberg, Tal Remez, Emanuele Rodola, Alex Bronstein, Michael Bronstein, Ron Kimmel, and Daniel Cremers.
\newblock Efficient deformable shape correspondence via kernel matching.
\newblock In \emph{2017 International Conference on 3D vision (3DV)}, pages 517--526. IEEE, 2017.

\bibitem[Yan et~al.(2014)Yan, Bao, Zhang, and Wonka]{yan2014low}
Dong-Ming Yan, Guanbo Bao, Xiaopeng Zhang, and Peter Wonka.
\newblock Low-resolution remeshing using the localized restricted voronoi diagram.
\newblock \emph{IEEE Transactions on Visualization and Computer Graphics (TVCG)}, 2014.

\bibitem[Yang et~al.(2018)Yang, Chai, Chen, and Cohen]{yang2019}
Fang Yang, Li Chai, Da Chen, and Laurent~D. Cohen.
\newblock Geodesic via asymmetric heat diffusion based on {F}insler metric.
\newblock In \emph{Asian Conference on Computer Vision (ACCV), Springer}, pages 371--386, 2018.

\end{thebibliography}
}

\clearpage

\setcounter{page}{1}
\setcounter{section}{0}

\maketitlesupplementary

\noindent This supplementary material is organized as follows:\\
\noindent \textbf{\cref{sec:proof}} provides some proofs concerning properties of Randers metrics mentioned in the main paper: positivity, dual formula, and bounded norm of the dual Randers vector field. \\
\noindent \textbf{\cref{sec:prop1}} details the proof of \cref{prop: heat eqn simplified randers distance trick}. \\
\noindent \textbf{\cref{sec:runtime}} gives additional details about runtime in our experiments.\\
\noindent \textbf{\cref{sec:figures}} shows additional visual results of anisotropic kernels and shape correspondence.\\

\section{Classical proofs for the Randers metric}\label{sec:proof}

The proofs in this section of the supplementary material are not novel in the Randers metric community. The purpose of putting them here is to present a self-contained theory that is fully and simply explained, avoiding the need for readers to search through other challenging mathematical papers. For prior existing proofs, see for example \cite{mirebeau2014}. Note that errors exist in various reference materials, which further justifies rederiving the proofs. For instance, in \cite{mirebeau2014} the formulae for the dual Randers metric in Prop.~5.1 are incorrect.

\subsection{Proof of Randers metric positivity}

Any vector in the tangent space $T_xX$ can be given in the form $M(x)^{-1}v$. We can then write
\begin{align}
    &\mathcal{F}_x(M(x)^{-1}v) = \sqrt{v^\top M(x)^{-1} v} + \omega(x)^\top  M(x)^{-1}v\\
    &\quad= \lVert M(x)^{-\tfrac{1}{2}}v\rVert + \langle M(x)^{-\tfrac{1}{2}}\omega(x),  M(x)^{-\tfrac{1}{2}}v\rangle.
\end{align}

Using the Cauchy-Schwartz inequality, we get $\lvert\omega(x)^\top M(x)^{-1}v\rvert \le \lVert M(x)^{-\tfrac{1}{2}}\omega(x) \rVert \lVert M(x)^{-\tfrac{1}{2}} v\rVert$. The assumption on $\omega$ provides $\lVert M(x)^{-\tfrac{1}{2}}\omega(x)\rVert <1$, thus as soon as $v\neq 0$, we get $\mathcal{F}_x(M(x)^{-1}v) > 0$. 
\qed

\subsection{Proof of dual Randers metric formulae}

For conciseness, we drop the explicit dependence on $x$. The dual Finsler metric $\mathcal{F}^*$ of the Randers metric $\mathcal{F}$ is given by \cref{def: dual finsler}. By positive homogeneity of $\mathcal{F}$, it is also given by
\begin{equation}
    \mathcal{F}^*(v) = \max \left\{\frac{\langle v, b \rangle}{\mathcal{F}(b)} ; b\neq 0 \right\}.
\end{equation}
As such, $\tfrac{1}{\mathcal{F}^*(v)} = \min \left\{\tfrac{\mathcal{F}(b)}{\langle v, b \rangle} ; b\neq 0 \right\}$, and once again by homogeneity we have
\begin{equation}
    \label{eq: 1 / F* = min F}
    \frac{1}{\mathcal{F}^*(v)} = \min \{\mathcal{F}(b) ; \langle v, b\rangle = 1 \}.
\end{equation}
The Lagrangian for this constrained optimization problem is given by $L(b, \mu) =  \mathcal{F}(b) + \mu \langle v, b \rangle$. Computing the gradient $\tfrac{\partial L}{\partial b}$ and setting it to $0$ to satisfy the KKT conditions, the optimal $b$, denoted $b^*$, satisfies
\begin{equation}
    \label{eq: randers dL/db = 0}
    M\frac{b^*}{\lVert b^* \rVert_M} + \omega + \mu v = 0.
\end{equation}
On one hand, by computing the scalar product with $b^*$, and recalling that the constraint enforces $\langle v, b^*\rangle = 1$, and recalling the formula for the Randers metric (\cref{eq: randers}) we get $\mu = -\mathcal{F}(b^*)$. Since $b^*$ solves the optimization problem in \cref{eq: 1 / F* = min F}, we have
\begin{equation}
    \label{eq: randers mu = - 1/F*}
    \mu = -\frac{1}{\mathcal{F}^*(v)}.
\end{equation}
On the other hand, we can compute $\lVert \omega + \mu v \rVert_{M^{-1}}$ from \cref{eq: randers dL/db = 0}
\begin{equation}
    \lVert \omega + \mu v \rVert_{M^{-1}} = \frac{\lVert Mb^* \rVert_{M^{-1}}}{\lVert b^* \rVert_M} = 1.
\end{equation}
Taking the square, we get a degree two polynomial for which $\mu$ is a root
\begin{equation}
    \mu^2 \lVert v \rVert_{M^{-1}}^2  + 2\mu \langle \omega, v\rangle_{M^{-1}} + \lVert \omega\rVert_{M^{-1}}^2 - 1 = 0.
\end{equation}
Writing $\alpha = 1 - \lVert \omega \rVert_{M^{-1}}^2 >0$, the roots are given by
\begin{equation}
    \mu_\pm = \frac{-\langle \omega, v\rangle_{M^{-1}} \pm \sqrt{
\langle \omega, v\rangle_{M^{-1}}^2 + \lVert v\rVert_{M^{-1}}^2 \alpha}}{\lVert v \rVert_{M^{-1}}^2}.
\end{equation}
Clearly, $\mu_+ > 0$ and $\mu_- < 0$ for $v\neq 0$, yet $\mu < 0$ following $\mu = -\mathcal{F}(b^*)$ as the metric is always positive. Therefore $\mu = \mu_-$, and then by inverting \cref{eq: randers mu = - 1/F*} we get
\begin{align}
    \mathcal{F}^*(v) &= \frac{\lVert v\rVert_{M^{-1}}^2}{\langle \omega, v\rangle_{M^{-1}} + \sqrt{\langle \omega, v\rangle_{M^{-1}}^2 + \lVert v\rVert_{M^{-1}}^2 \alpha}} \\
    &= \sqrt{v^\top \frac{1}{\alpha^2}\left( \alpha M^{-1} + M^{-1}\omega \omega^\top M^{-1}\right) v} \nonumber\\
    &\hspace{3em}-\frac{1}{\alpha} \langle \omega, v\rangle_{M^{-1}},
\end{align}
where we used the classical trick $\tfrac{1}{x+\sqrt{y}} = \tfrac{x-\sqrt{y}}{x^2-y}$ to remove the square root from the denominator.  We now recognise the dual metric $\mathcal{F}^*$ of the Randers metric $\mathcal{F}$ as a Randers metric associated to $(M^*, \omega^*)$ with
\begin{equation}
    \begin{cases}
        M^* = \frac{1}{\alpha^2}(\alpha M^{-1} + (M^{-1}\omega) (M^{-1}\omega)^\top ), \\
        \omega^* = -\frac{1}{\alpha}M^{-1}\omega.
    \end{cases}
    \qed
\end{equation}

\subsection{Proof of $\lVert \omega^*\rVert_{{M*}^{-1}(x)} < 1$}

For conciseness, we drop the explicit dependence on $x$ and we denote $\tilde\omega = M^{-\tfrac{1}{2}}\omega$. Recalling that $M$ is symmetric and that $\alpha>0$, and plugging $M^{-1}\omega = M^{-\tfrac{1}{2}}\tilde\omega$ into \cref{eq: M* formula,eq: omega* formula}, we get
\begin{equation}
    \lVert \omega^*\rVert_{{M^*}^{-1}}^2
    = \tilde{\omega}^\top
    (\alpha I +  \tilde\omega\tilde\omega^\top)^{-1}
    \tilde\omega.
\end{equation}

We then invoke the Sherman-Morrison formula
$\left( A+uv^\top\right)^{-1} = A^{-1} - \frac{1}{1+ v^\top A^{-1} u}A^{-1}uv^\top A^{-1}$ for matrix $A = I$ and vectors $u = v = \alpha^{-\frac{1}{2}}\tilde\omega$. Writing $q_{M,\omega} = \alpha^{-1}\tilde\omega^\top \tilde\omega = \alpha^{-1}\omega^\top M^{-1}\omega$ for conciseness, we get
\begin{align}
    \lVert \omega^*\rVert_{{M^*(x)}^{-1}}^2
    &= q_{M, \omega}\left(1  - 
    \frac{q_{M,\omega}}{1 + q_{M,\omega}}  \right) \\
    &=\frac{q_{M,\omega}}{1+q_{M,\omega}}\\
    &= 1 - \frac{1}{1 + \alpha^{-1}\omega^\top M^{-1} \omega}<1. 
    \qed
\end{align}

\begin{figure}[tb]%
  \centerline{%
  \begin{tabular}{cc}%
    \includegraphics[height=6.2cm]{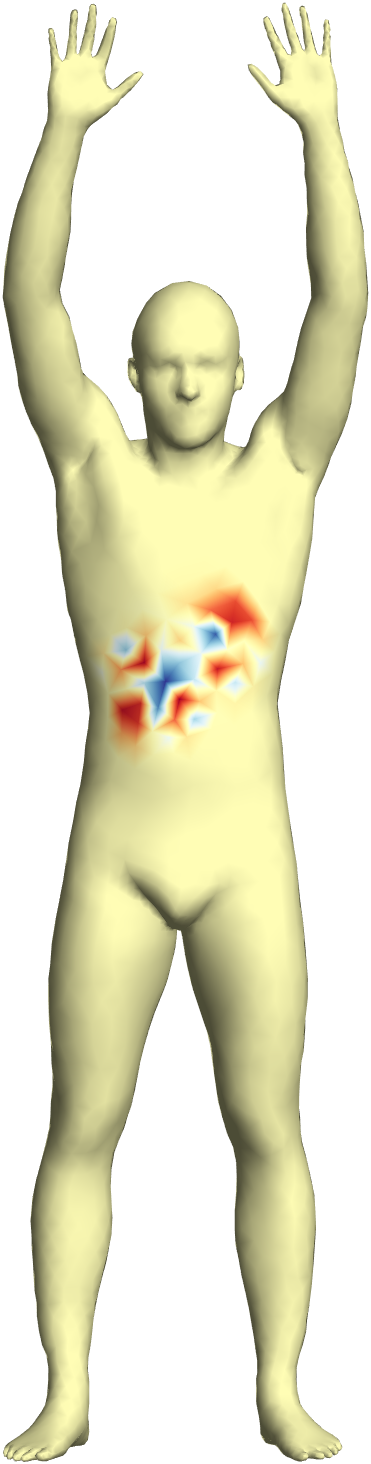}
    &
    \hspace{1cm}
    \includegraphics[height=6.2cm]{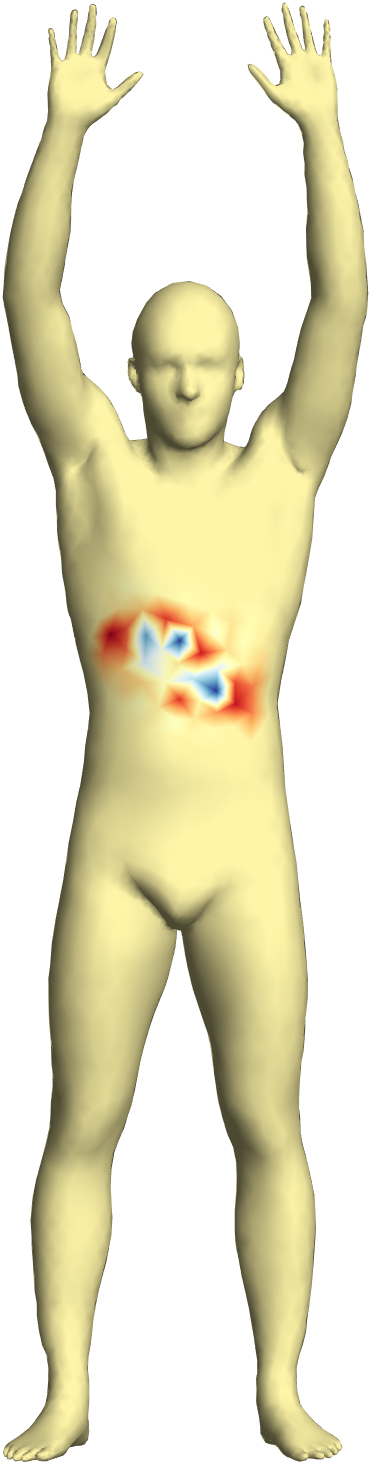}
  \end{tabular}
  }
\captionof{figure}{
Finsler-based anisotropic kernels $g_{\alpha\theta,x}$ centered at the same point $x$ with different rotation angles $\theta$ equal to $0$ (left) and $\tfrac{3\pi}{8}$ (right). Here the filter is chosen to be the Chebychev polynomial $\hat{g}(\lambda) = T_5(\lambda)$.}
\label{fig:filter}
\end{figure}

\section{Proof of \cref{prop: heat eqn simplified randers distance trick}}\label{sec:prop1}

For conciseness, we drop the explicit dependence on $x$. 
Denoting $\mathcal{P}(u) = \mathcal{F}^*(\nabla u)\nabla \mathcal{F}^*(\nabla u)$, the Finsler heat equation is given by
\begin{equation}
    \partial_t u = \diverg(\mathcal{P}(u)).
\end{equation}
Differentiating the Finsler dual metric \cref{eq: F* randers formula}, we have
\begin{equation}
    \label{eq: grad F*}
    \nabla \mathcal{F}^*(\nabla u) = M^* \frac{\nabla u}{\lVert\nabla u\rVert_{M^*}} + \omega^*.
\end{equation}
Given the initial condition, the solution $u$ will approximately satisfy $\mathcal{F}^*(\nabla u) = 1$ in short time, i.e. for small time steps $t$.
As such, in short time we have
\begin{equation}
    \label{eq: P(u) with F*=1}
    \mathcal{P}(u) = M^* \frac{\nabla u}{\lVert\nabla u\rVert_{M^*}} + \omega^*.
\end{equation}
Following \cite{bonnans2022linear} (Corrolary 2.10), if $u$ follows this condition, then there exists a positive scalar $\lambda$ giving the proportionality between the following vectors
\begin{equation}
    \label{eq: g1 lambda g2}
    M^* \frac{\nabla u}{\lVert\nabla u\rVert_{M^*}} + \omega^* = 
    \lambda \big((M^* - \omega^*{\omega^*}^\top)\nabla u + \omega^*\big).
\end{equation}
This is due to the fact that $\nabla u$ belongs to the shared 0-level set of functions of $v$ on the tangent space $f_1(v) = \lVert v\rVert_{M^*} + {\omega^*}^\top v - 1$ and $f_2(v) = \lVert v\rVert_{M^* - \omega^*{\omega^*}^\top}^2 + 2{\omega^*}^\top v - 1$ which share the same sign for all $v$, and with gradients computed at $v=\nabla u$ equal to $\nabla f_1(\nabla u) = M^* \frac{\nabla u}{\lVert\nabla u\rVert_{M^*}} + \omega^*$ and $\nabla f_2 (\nabla u) = 2 ((M^* - \omega^*{\omega^*}^\top)\nabla u + \omega^*)$. By computing the scalar product of \cref{eq: g1 lambda g2}, i.e. of the equation $\nabla f_1(\nabla u) = \tfrac{\lambda}{2}\nabla f_2(\nabla u)$, with $\nabla u$, and recalling that $\mathcal{F}^*(\nabla u) = 1$, we have that
\begin{equation}
    \nabla u^\top \nabla f_1(\nabla u)  = \mathcal{F}^*(\nabla u) = 1
\end{equation}
and that
\begin{align}
    \nabla u^\top \frac{1}{2}\nabla f_2(\nabla u) &= \lVert \nabla u\rVert_{M^* - \omega^*{\omega^*}^\top}^2 + (1-\lVert \nabla u \rVert_{M^*}) \\
    &= \lVert \nabla u\rVert_{M^*}^2 - (1-\lVert \nabla u\rVert_{M^*})^2 \nonumber\\
    &\hspace{2em}+ (1-\lVert \nabla u\rVert_{M^*})\\
    &= \lVert \nabla u\rVert_{M^*}.
\end{align}
Another calculation gives us that
\begin{align}
    1 + \lVert \nabla u\rVert_{M^* - \omega^*{\omega^*}^\top}^2 &= 1 + \lVert \nabla u\rVert_{M^*}^2 - (1 - \lVert \nabla u\rVert_{M^*})^2\\
    &= 2\lVert \nabla u\rVert_{M^*},
\end{align}
thus $\nabla u^\top \frac{1}{2}\nabla f_2(\nabla u) = \tfrac{1}{2}(1 + \lVert \nabla u\rVert_{M^* - \omega^*{\omega^*}^\top}^2)$. As $\nabla f_1(\nabla u) = \tfrac{\lambda}{2}\nabla f_2(\nabla u)$, we can deduce that 
\begin{equation}
    \lambda = \frac{2}{1 + \nabla u^\top (M^* - \omega^*{\omega^*}^\top)\nabla u}.
\end{equation}
Under the assumption that the metric is close to Riemannian, i.e. $\lVert \omega\rVert$ is small, then we can approximate $\lVert \nabla u\rVert_{M^*} = \mathcal{F}^*(\nabla u)= 1$, and thus approximate the proportionality coefficient $\lambda$ by
\begin{equation}
    \lambda = \frac{2}{1+\lVert \nabla u\rVert_{M^*}} = 1.
\end{equation}
Using \cref{eq: P(u) with F*=1,eq: g1 lambda g2}, we conclude that for this particular initialisation the heat equation becomes in short time
\begin{equation}
    \partial_t u = \diverg\left((M^* -\omega^*{\omega^*}^\top)\nabla u \right) + \diverg(\omega^*). \qed
\end{equation}

\section{Implementation and runtime}\label{sec:runtime}

In terms of runtime and memory, the difference with \cite{li2020} comes from a small overhead due to computing the metric as preprocessing. Our additional actions involve cheap basic operations on $3\times 3$ matrices, e.g. inversion (\cref{eq: M = H^-1}) and multiplication (\cref{eq: M* formula,eq: omega* formula,eq: H = R U D U^T R^T,eq: omega = tau R u}). Note that further preprocessing operations, e.g.~\cref{eq: weights w flbo theta}, are the same as those in \cite{li2020} and thus have the same cost. For example, our implementation on a CPU `2 GHz Intel Core i5' takes on average 1.70s per shape in the FAUST (Remeshed) dataset (1.82s per shape in the SCAPE (Remeshed) dataset), compared to 1.55s (resp.~1.76s) for \cite{li2020}. Once this preprocessing is performed, we obtain an anisotropic LBO discretised as a matrix (line 432) of the same size as in \cite{li2020}. Therefore, runtime of the core shape-matching learning algorithm is the same and is significantly longer, around 30s per epoch for FAUST (Remeshed) (resp.~10s for SCAPE (Remeshed)) on 1 GPU `Quadro P6000'.

\section{Further visual results}\label{sec:figures}

\subsection{Example kernels}

We show in \cref{fig:filter} two of our anisotropic kernels $g_{\alpha\theta,x}$ at different angles, when choosing a predetermined Chebychev polynomial for its spectral decomposition.

\subsection{More shape correspondence results}

We display further visual shape correspondence results in \cref{fig:qual-faustrm,fig:qual-scaperm,fig:qual-shrec-cuts,fig:qual-shrec-holes}. Our method clearly outperforms FieldConv while providing on par visual results to ACSCNN. The superiority of our method is revealed in the quantitative results of the main paper.

\begin{figure*}%
  \centerline{%
  \begin{tabular}{c|c|ccccc}%
    &
    \multirow{2}{*}[-1cm]{
    \rotatebox[origin=c]{90}{FieldConv}}
    &
    \raisebox{-0.5\height}{
    \includegraphics[height=3.4cm]{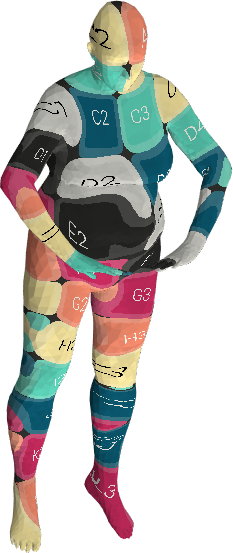} 
    }
    &
    \raisebox{-0.5\height}{
    \includegraphics[height=3.4cm]{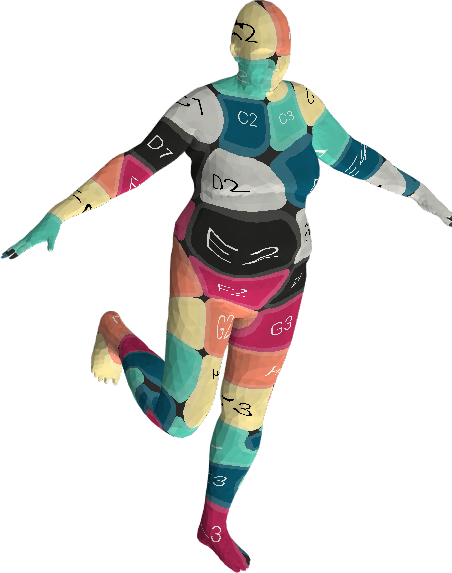} 
    }
    &
    \raisebox{-0.5\height}{
    \includegraphics[height=3.4cm]{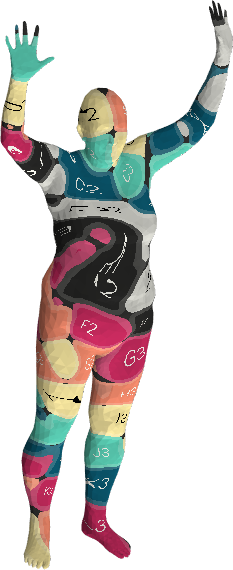} 
    }
    &
    \raisebox{-0.5\height}{
    \includegraphics[height=3.4cm]{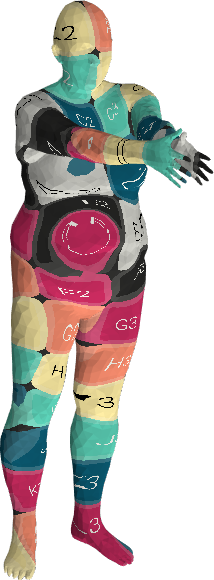} 
    }
    &
    \raisebox{-0.5\height}{
    \includegraphics[height=3.4cm]{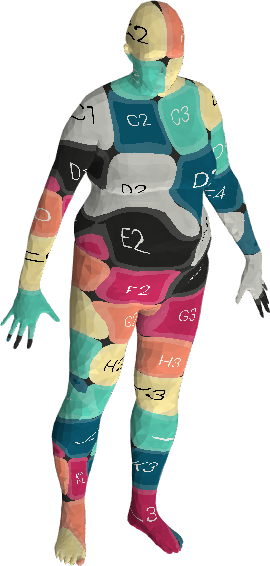} 
    }
    \\
    &
    &
    \raisebox{-0.5\height}{
    \includegraphics[height=3.4cm]{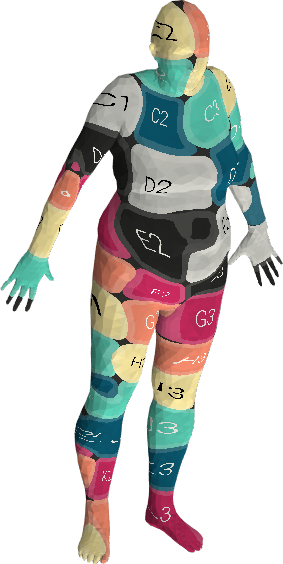} 
    }
    &
    \raisebox{-0.5\height}{
    \includegraphics[height=3.4cm]{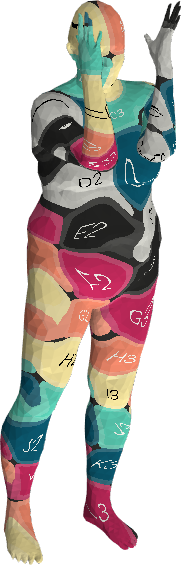} 
    }
    &
    \raisebox{-0.5\height}{
    \includegraphics[height=3.4cm]{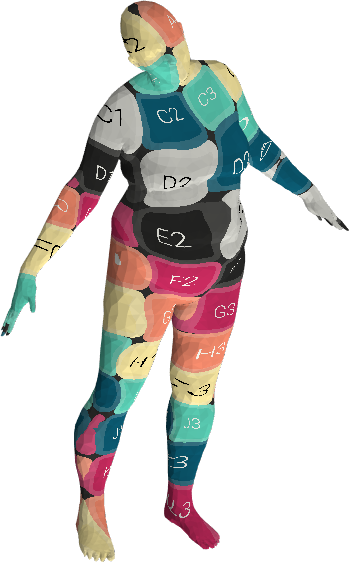} 
    }
    &
    \raisebox{-0.5\height}{
    \includegraphics[height=3.4cm]{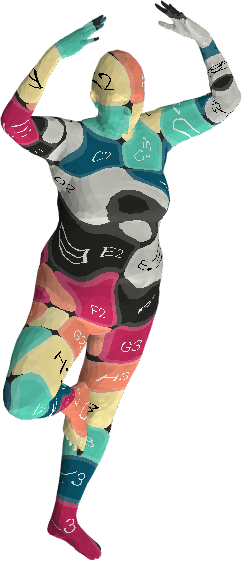} 
    }
    &
    \raisebox{-0.5\height}{
    \includegraphics[height=3.4cm]{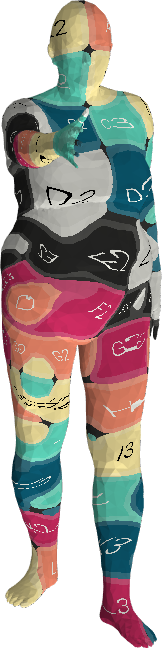} 
    }
    \\
    &&&&&&
    \\
    \cline{2-7}&&&&&&
    \\
    \multirow{2}{*}[0cm]{
    \raisebox{-0.5\height}{
    \includegraphics[height=3.4cm]{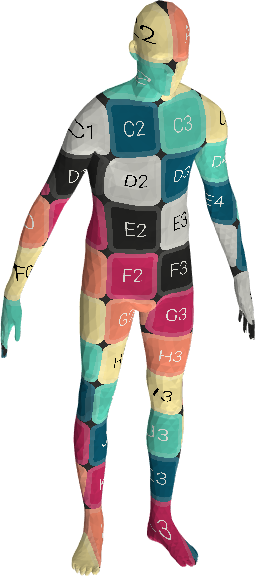}
    }
    }
    &
    \multirow{2}{*}[-1cm]{
    \rotatebox[origin=c]{90}{ACSCNN}}
    &
    \raisebox{-0.5\height}{
    \includegraphics[height=3.4cm]{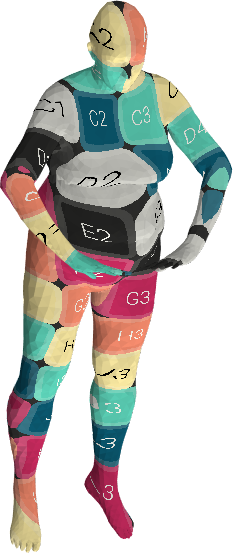} 
    }
    &
    \raisebox{-0.5\height}{
    \includegraphics[height=3.4cm]{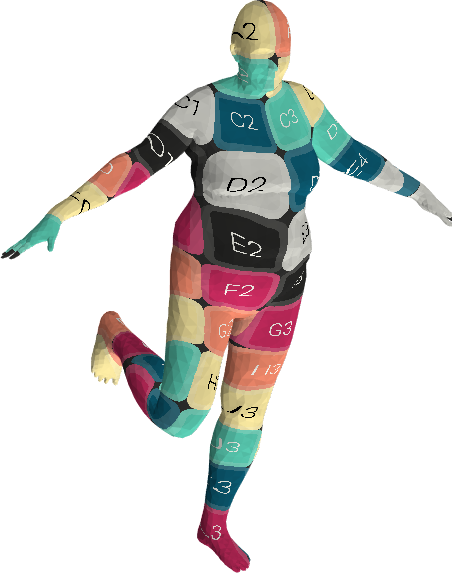} 
    }
    &
    \raisebox{-0.5\height}{
    \includegraphics[height=3.4cm]{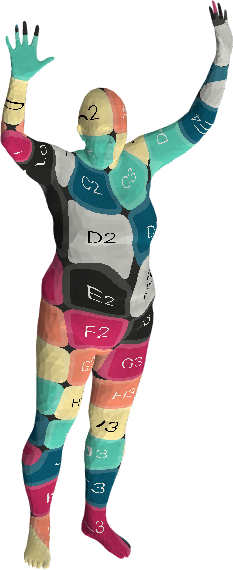} 
    }
    &
    \raisebox{-0.5\height}{
    \includegraphics[height=3.4cm]{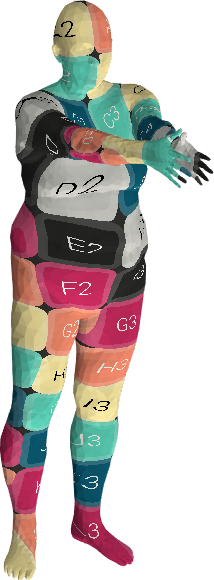} 
    }
    &
    \raisebox{-0.5\height}{
    \includegraphics[height=3.4cm]{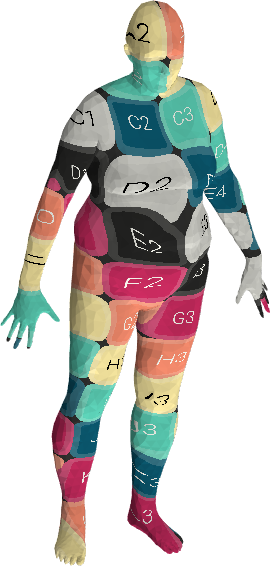} 
    }
    \\
    &
    &
    \raisebox{-0.5\height}{
    \includegraphics[height=3.4cm]{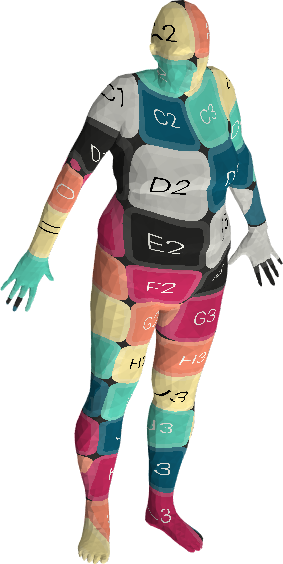} 
    }
    &
    \raisebox{-0.5\height}{
    \includegraphics[height=3.4cm]{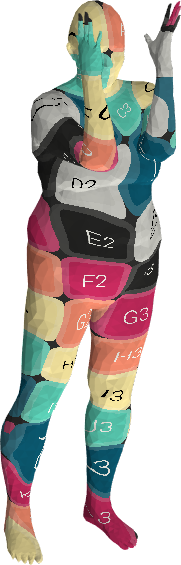} 
    }
    &
    \raisebox{-0.5\height}{
    \includegraphics[height=3.4cm]{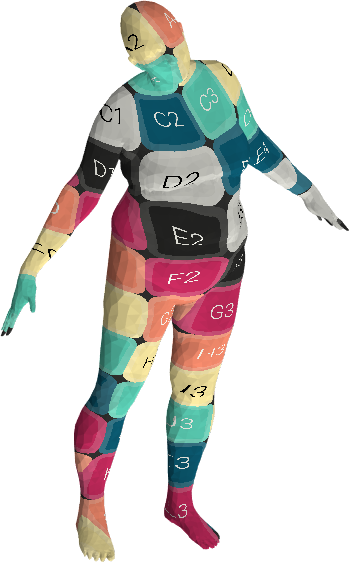} 
    }
    &
    \raisebox{-0.5\height}{
    \includegraphics[height=3.4cm]{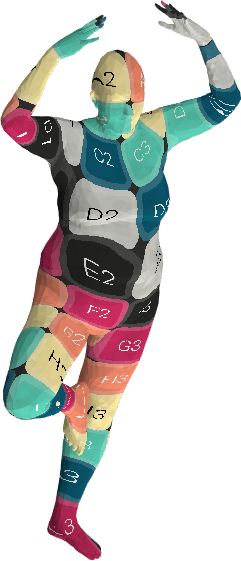} 
    }
    &
    \raisebox{-0.5\height}{
    \includegraphics[height=3.4cm]{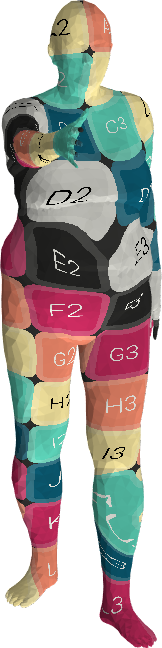} 
    }
    \\
    &&&&&&
    \\
    \cline{2-7}&&&&&&
    \\
    &
        \multirow{2}{*}[-1.3cm]{
    \rotatebox[origin=c]{90}{Ours}}
    &
    \raisebox{-0.5\height}{
    \includegraphics[height=3.4cm]{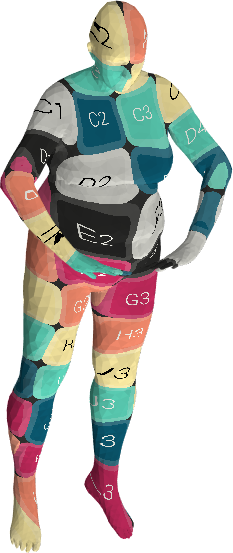}
    }
    &
    \raisebox{-0.5\height}{
    \includegraphics[height=3.4cm]{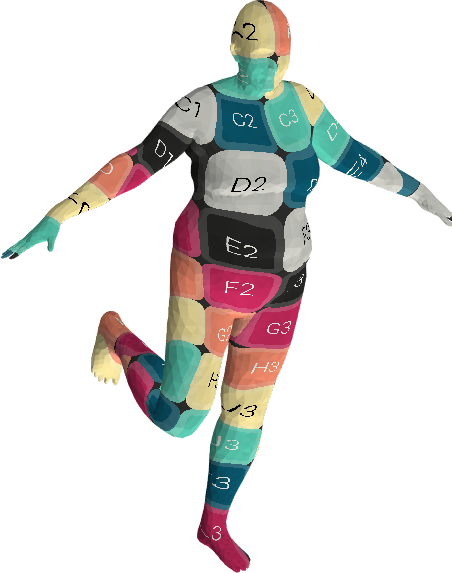}
    }
    &
    \raisebox{-0.5\height}{
    \includegraphics[height=3.4cm]{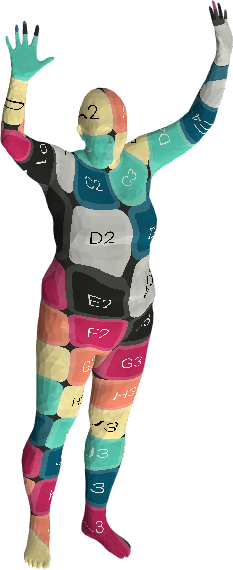}
    }
    &
    \raisebox{-0.5\height}{
    \includegraphics[height=3.4cm]{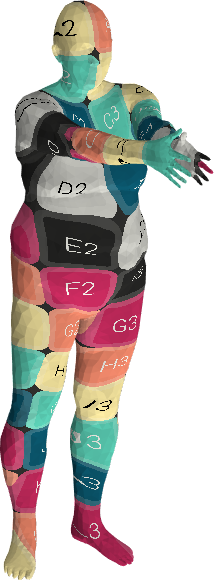} 
    }
    &
    \raisebox{-0.5\height}{
    \includegraphics[height=3.4cm]{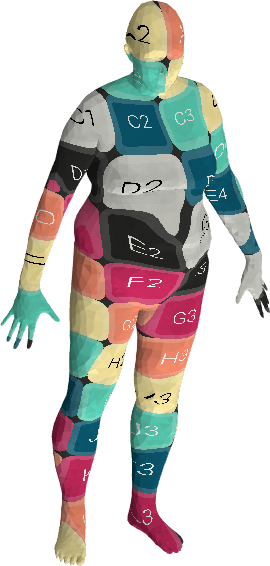} 
    }
    \\
    &
    &
    \raisebox{-0.5\height}{
    \includegraphics[height=3.4cm]{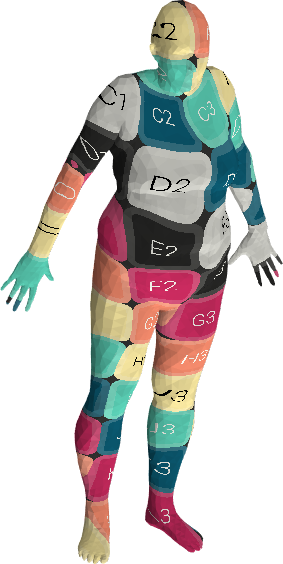}
    }
    &
    \raisebox{-0.5\height}{
    \includegraphics[height=3.4cm]{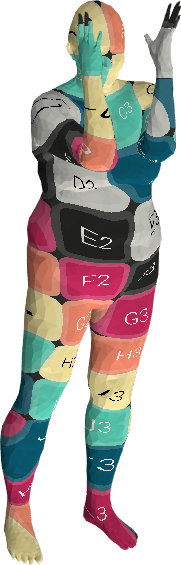}
    }
    &
    \raisebox{-0.5\height}{
    \includegraphics[height=3.4cm]{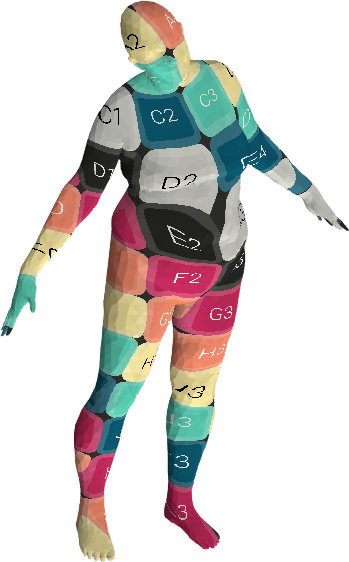}
    }
    &
    \raisebox{-0.5\height}{
    \includegraphics[height=3.4cm]{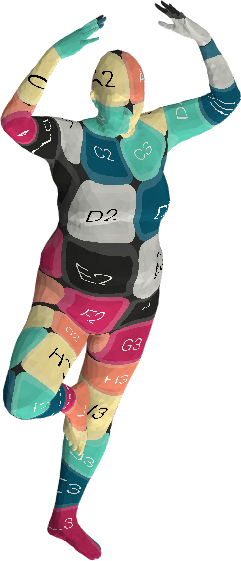} 
    }
    &
    \raisebox{-0.5\height}{
    \includegraphics[height=3.4cm]{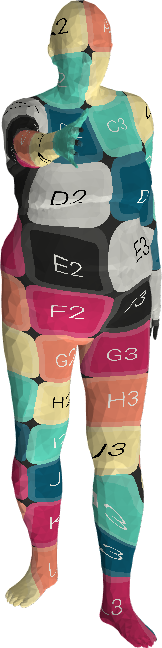} 
    }
  \end{tabular}
  }
\captionof{figure}{
Visual shape correspondence results on the FAUST Remeshed dataset. The source shape is on the left, on which we perform dense correspondence estimation on the shapes to the right, using either FieldConv, ACSCNN, or our approach.
}
\label{fig:qual-faustrm}
\end{figure*}

\begin{figure*}%
  \centerline{%
  \begin{tabular}{c|c|ccccc}%
    &
    \multirow{2}{*}[-1cm]{
    \rotatebox[origin=c]{90}{FieldConv}}
    &
    \raisebox{-0.5\height}{
    \includegraphics[height = 3.4cm]{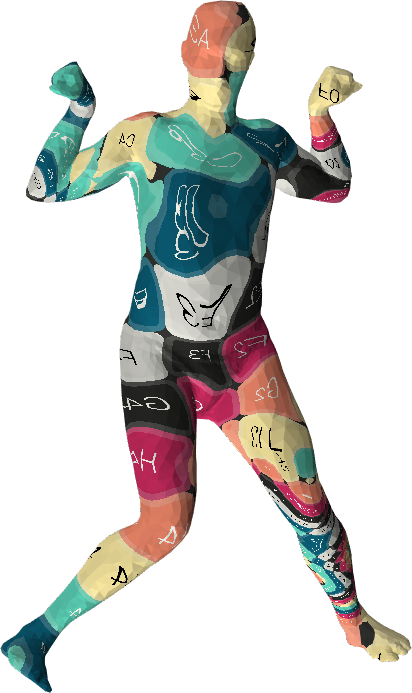} 
    }
    &
    \raisebox{-0.5\height}{
    \includegraphics[height = 3.4cm]{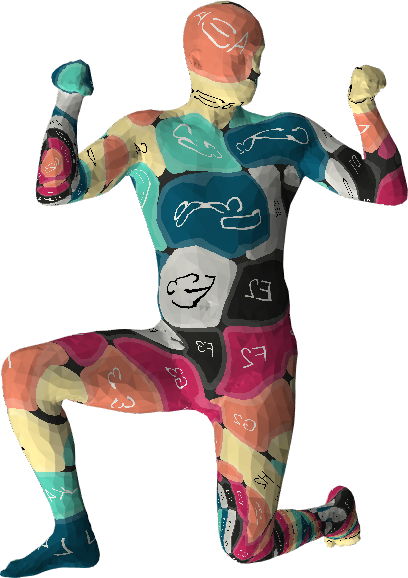} 
    }
    &
    \raisebox{-0.5\height}{
    \includegraphics[height = 3.4cm]{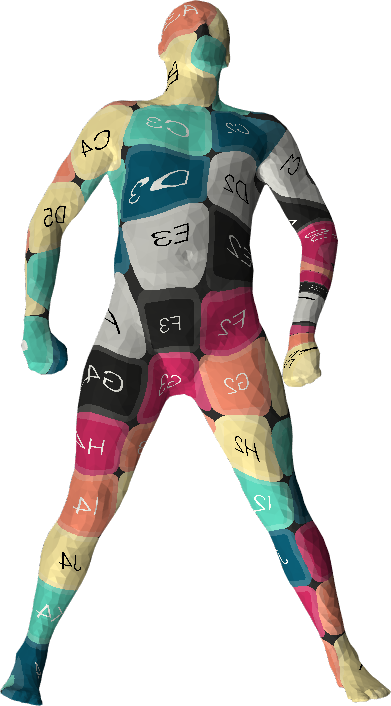} 
    }
    &
    \raisebox{-0.5\height}{
    \includegraphics[height = 3.4cm]{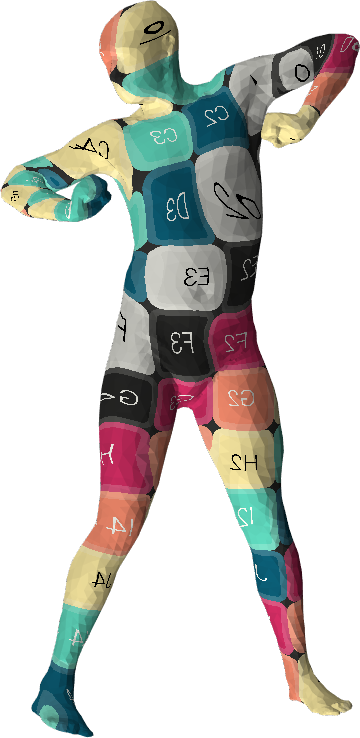} 
    }
    &
    \raisebox{-0.5\height}{
    \includegraphics[height = 3.4cm]{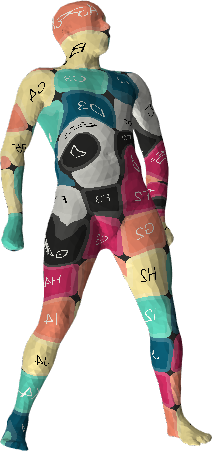} 
    }    
    \\
    &
    &
    \raisebox{-0.5\height}{
    \includegraphics[height = 3.4cm]{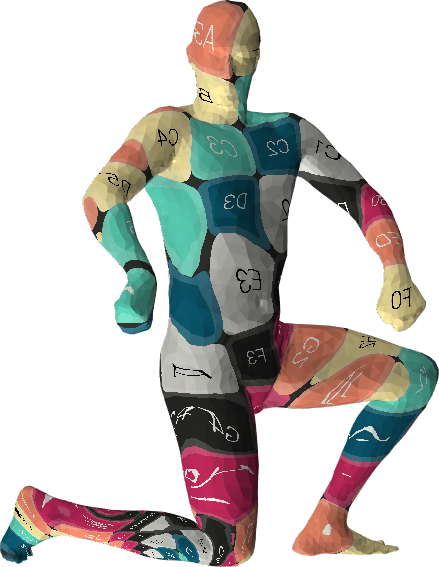} 
    }
    &
    \raisebox{-0.5\height}{
    \includegraphics[height = 3.4cm]{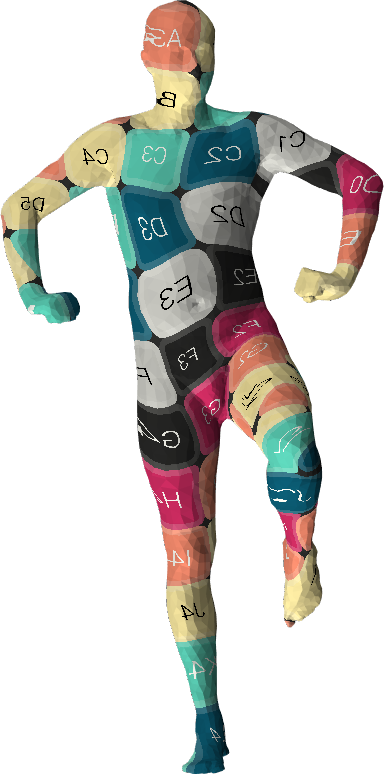} 
    }
    &
    \raisebox{-0.5\height}{
    \includegraphics[height = 3.4cm]{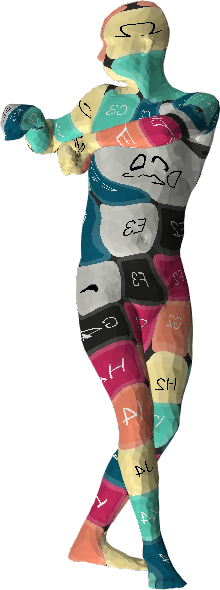} 
    }
    &
    \raisebox{-0.5\height}{
    \includegraphics[height = 3.4cm]{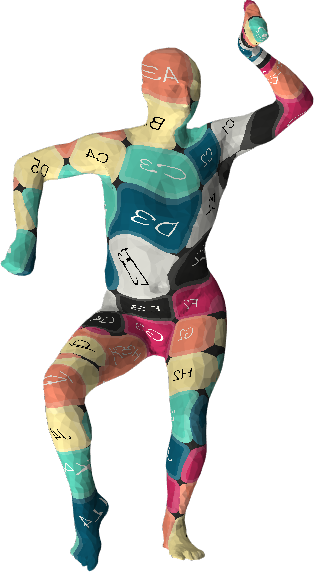} 
    }
    &
    \raisebox{-0.5\height}{
    \includegraphics[height = 3.4cm]{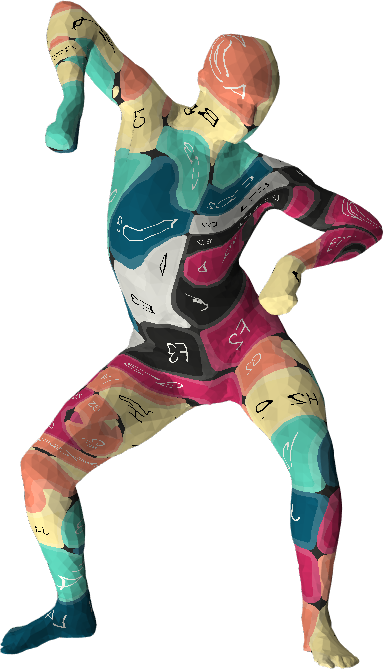} 
    }
    \\
    &&&&&&
    \\
    \cline{2-7}&&&&&&
    \\
    \multirow{2}{*}[0cm]{
    \raisebox{-0.5\height}{
    \includegraphics[height = 3.4cm]{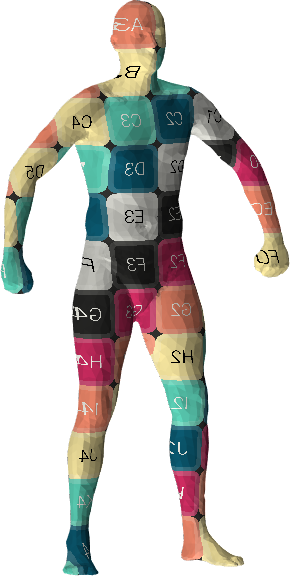}
    }
    }
    &
    \multirow{2}{*}[-1cm]{
    \rotatebox[origin=c]{90}{ACSCNN}}
    &
    \raisebox{-0.5\height}{
    \includegraphics[height = 3.4cm]{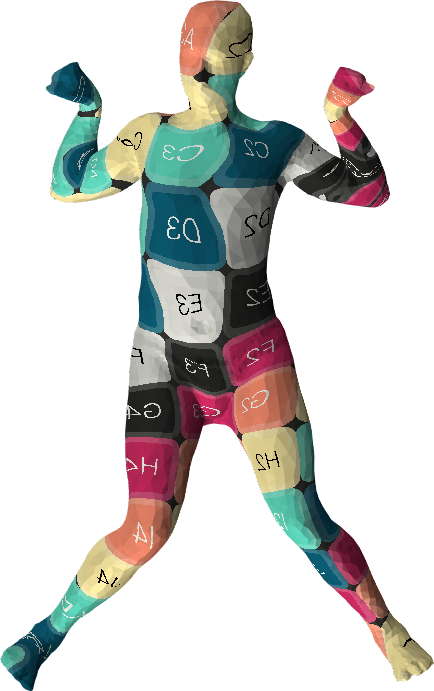} 
    }
    &
    \raisebox{-0.5\height}{
    \includegraphics[height = 3.4cm]{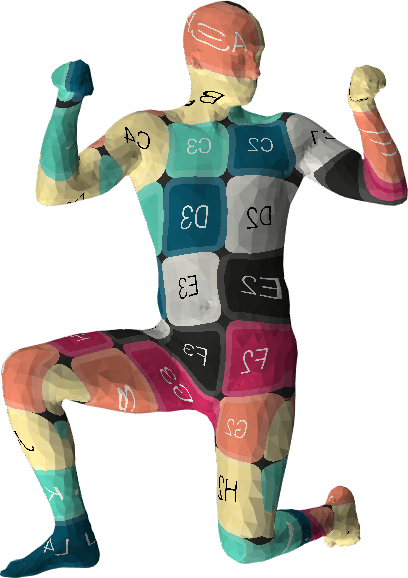} 
    }
    &
    \raisebox{-0.5\height}{
    \includegraphics[height = 3.4cm]{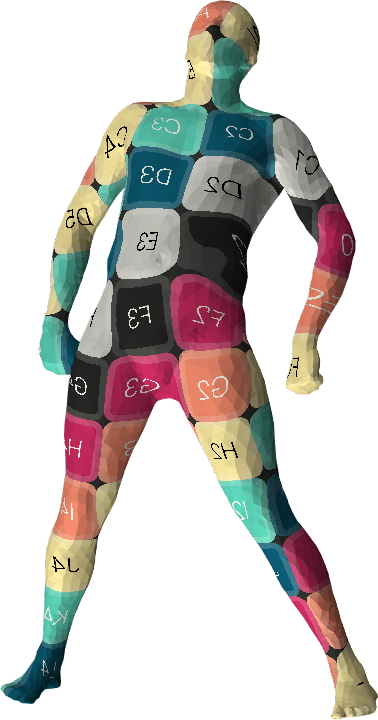} 
    }
    &
    \raisebox{-0.5\height}{
    \includegraphics[height = 3.4cm]{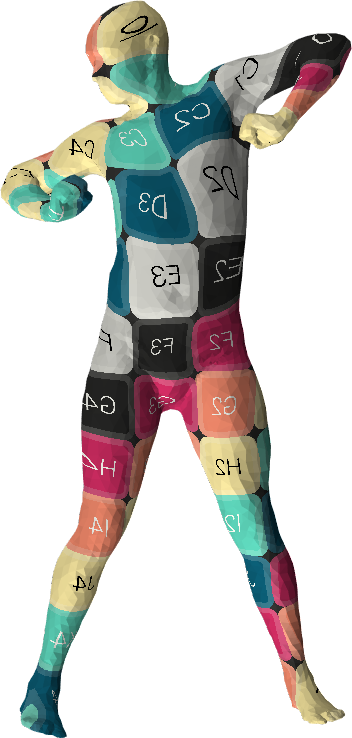} 
    }
    &
    \raisebox{-0.5\height}{
    \includegraphics[height = 3.4cm]{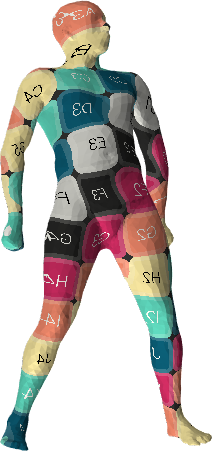} 
    }
    \\
    &
    &
    \raisebox{-0.5\height}{
    \includegraphics[height = 3.4cm]{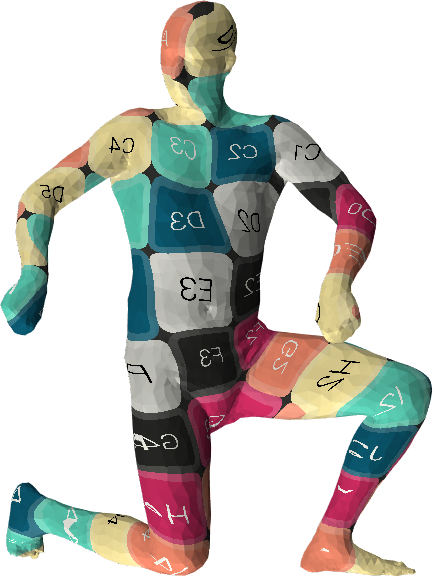} 
    }
    &
    \raisebox{-0.5\height}{
    \includegraphics[height = 3.4cm]{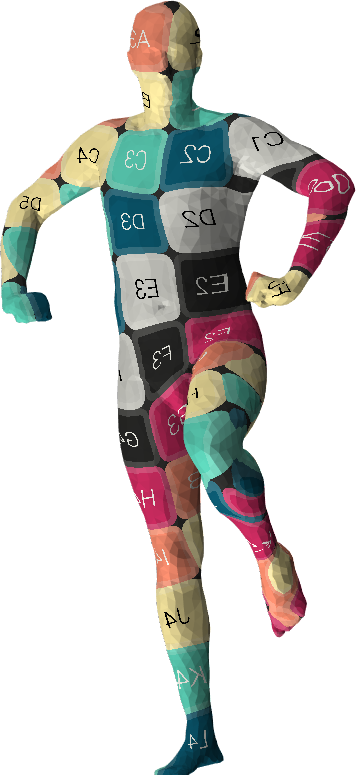} 
    }
    &
    \raisebox{-0.5\height}{
    \includegraphics[height = 3.4cm]{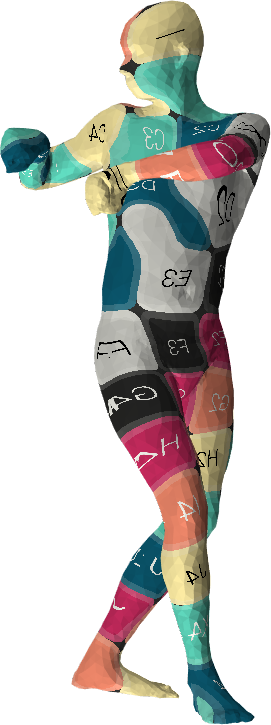} 
    }
    &
    \raisebox{-0.5\height}{
    \includegraphics[height = 3.4cm]{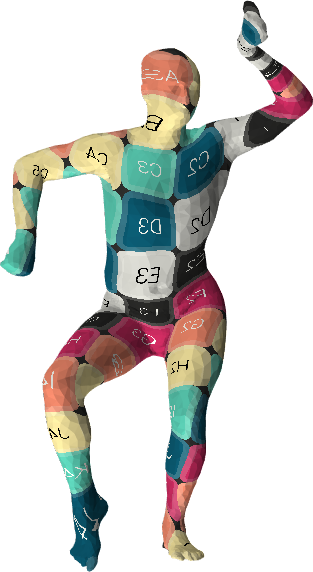} 
    }
    &
    \raisebox{-0.5\height}{
    \includegraphics[height = 3.4cm]{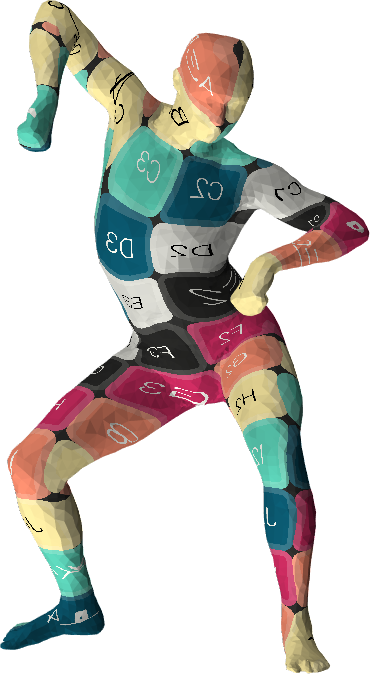} 
    }
    \\
    &&&&&&
    \\
    \cline{2-7}&&&&&&
    \\
    &
    \multirow{2}{*}[-1.5cm]{
    \rotatebox[origin=c]{90}{Ours}}
    &
    \raisebox{-0.5\height}{
    \includegraphics[height = 3.4cm]{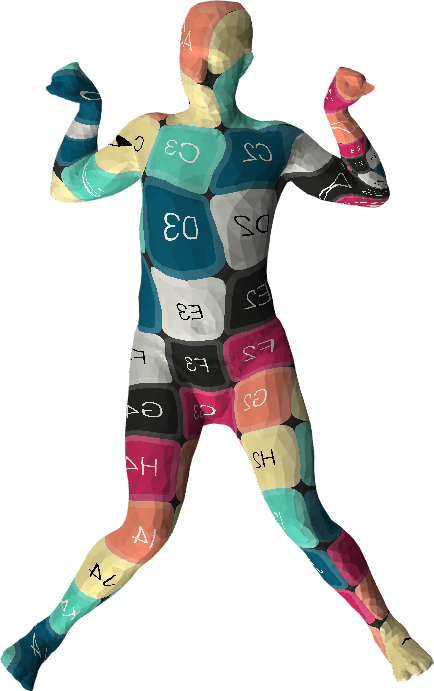}
    }
    &
    \raisebox{-0.5\height}{
    \includegraphics[height = 3.4cm]{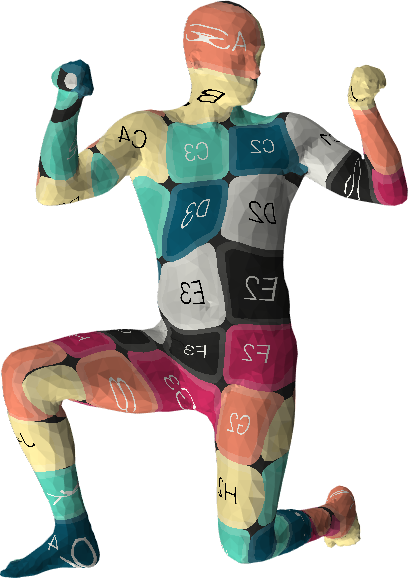}
    }
    &
    \raisebox{-0.5\height}{
    \includegraphics[height = 3.4cm]{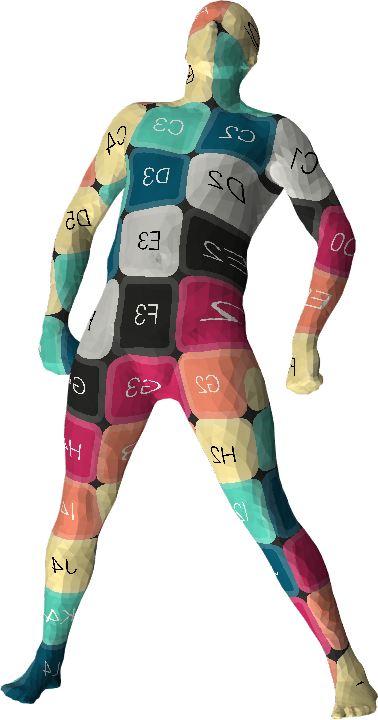}
    }
    &
    \raisebox{-0.5\height}{
    \includegraphics[height = 3.4cm]{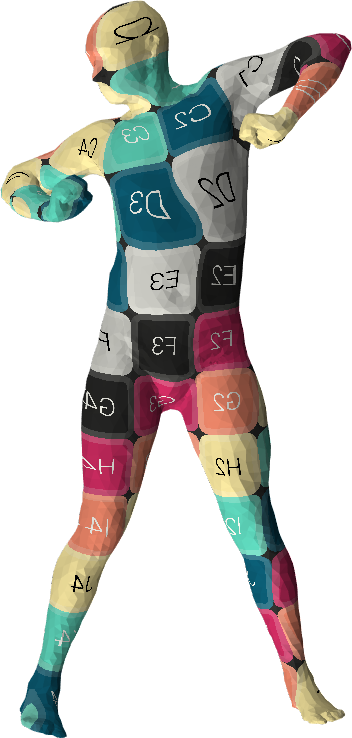} 
    }
    &
    \raisebox{-0.5\height}{
    \includegraphics[height = 3.4cm]{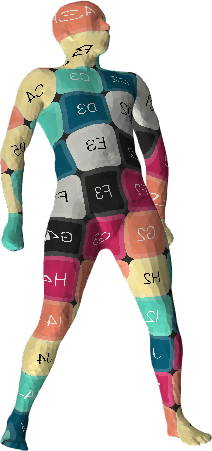} 
    }    
    \\
    &
    &
    \raisebox{-0.5\height}{
    \includegraphics[height = 3.4cm]{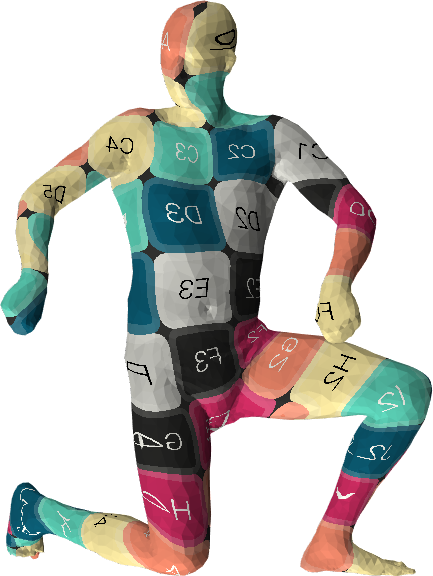}
    }
    &
    \raisebox{-0.5\height}{
    \includegraphics[height = 3.4cm]{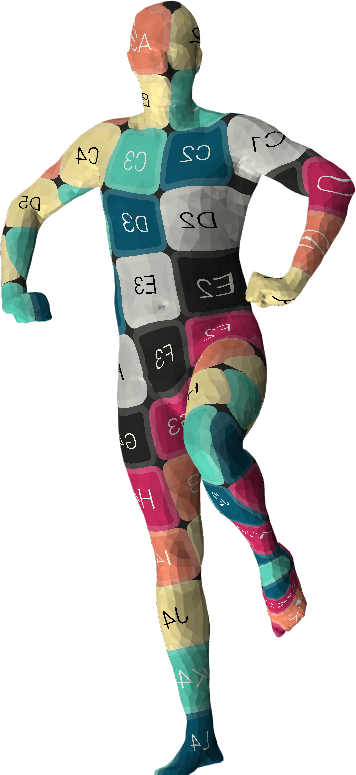}
    }
    &
    \raisebox{-0.5\height}{
    \includegraphics[height = 3.4cm]{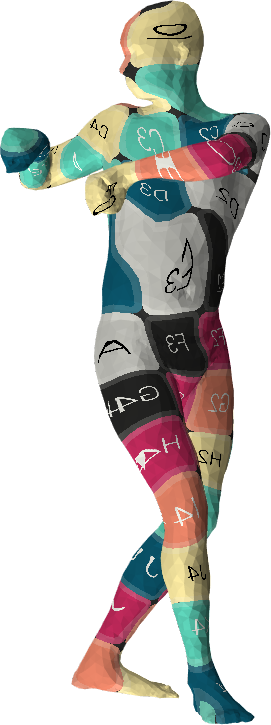}
    }
    &
    \raisebox{-0.5\height}{
    \includegraphics[height = 3.4cm]{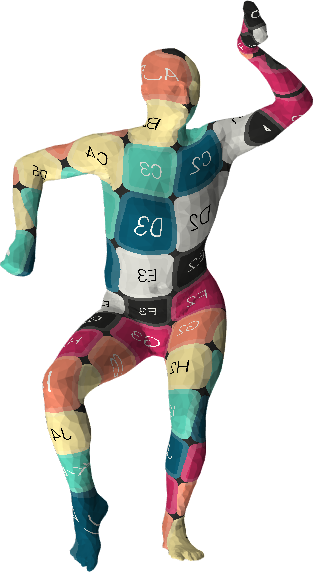} 
    }
    &
    \raisebox{-0.5\height}{
    \includegraphics[height = 3.4cm]{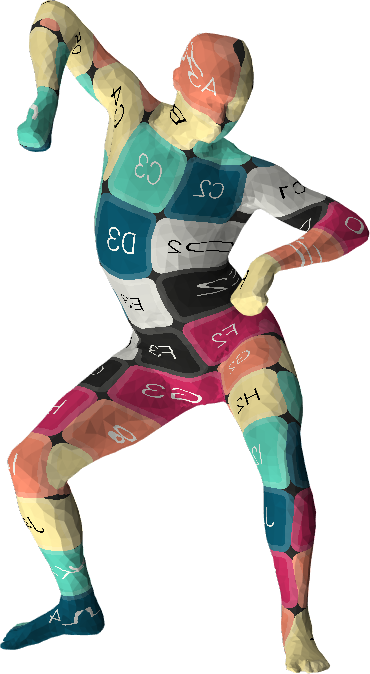} 
    }
  \end{tabular}
  }
\captionof{figure}{
Visual shape correspondence results on the SCAPE Remeshed dataset. The source shape is on the left, on which we perform dense correspondence estimation on the shapes to the right, using either FieldConv, ACSCNN, or our approach.}
\label{fig:qual-scaperm}
\end{figure*}


\begin{figure*}%
  \centerline{%
  \begin{tabular}{c|ccc|ccc}%
    & \multicolumn{3}{|c|}{ACSCNN} & \multicolumn{3}{|c}{Ours}
    \\
    &&&&&&
    \\
    \cline{2-7} &&&&&&
    \\
    \raisebox{-0.5\height}{
    \includegraphics[width = 0.06\paperwidth]{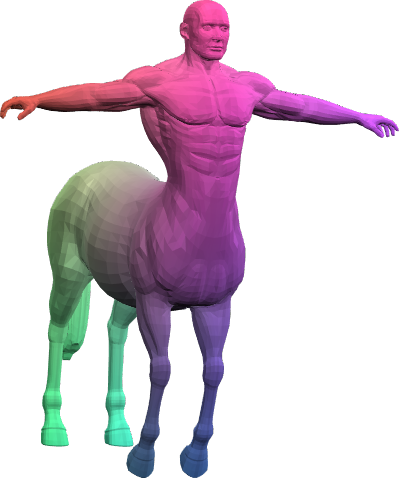} 
    }
    &
    \raisebox{-0.5\height}{
    \includegraphics[width = 0.06\paperwidth]{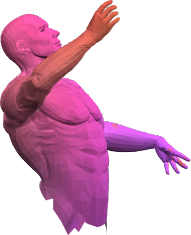} 
    }
    &
    \raisebox{-0.5\height}{
    \includegraphics[width = 0.06\paperwidth]{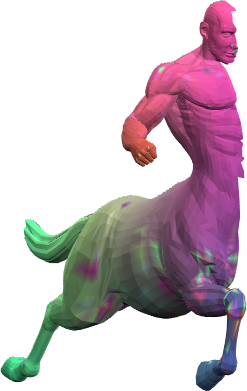} 
    }
    &
    \raisebox{-0.5\height}{
    \includegraphics[width = 0.06\paperwidth]{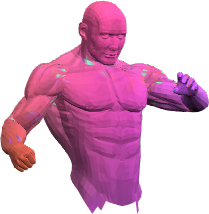} 
    }    
        &
    \raisebox{-0.5\height}{
    \includegraphics[width = 0.06\paperwidth]{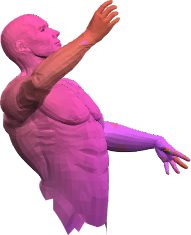} 
    }
    &
    \raisebox{-0.5\height}{
    \includegraphics[width = 0.06\paperwidth]{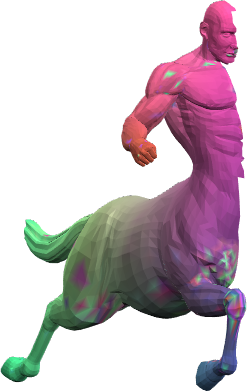} 
    }
    &
    \raisebox{-0.5\height}{
    \includegraphics[width = 0.06\paperwidth]{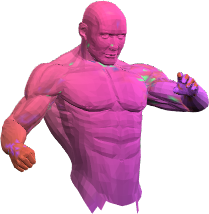} 
    } 
    \\
    \raisebox{-0.5\height}{
    \includegraphics[width = 0.06\paperwidth]{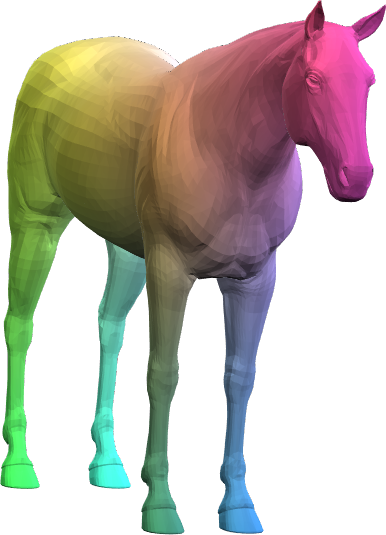} 
    }
    &
    \raisebox{-0.5\height}{
    \includegraphics[width = 0.06\paperwidth]{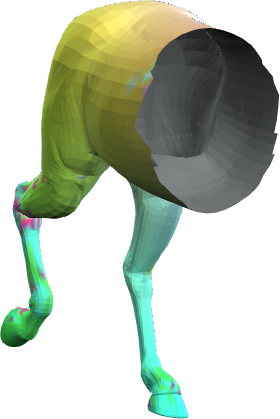} 
    }
    &
    \raisebox{-0.5\height}{
    \includegraphics[width = 0.06\paperwidth]{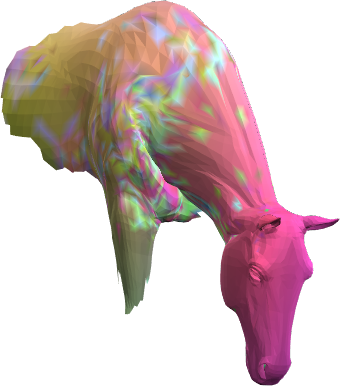} 
    }
    &
    \raisebox{-0.5\height}{
    \includegraphics[width = 0.06\paperwidth]{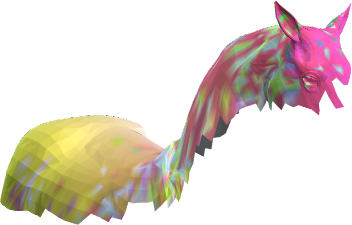} 
    }    
        &
    \raisebox{-0.5\height}{
    \includegraphics[width = 0.06\paperwidth]{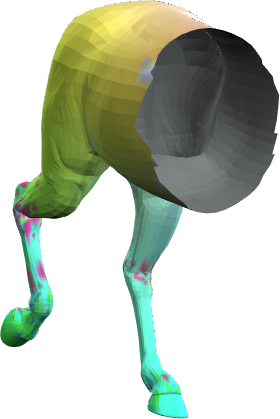} 
    }
    &
    \raisebox{-0.5\height}{
    \includegraphics[width = 0.06\paperwidth]{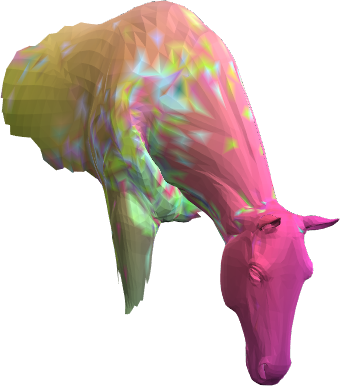} 
    }
    &
    \raisebox{-0.5\height}{
    \includegraphics[width = 0.06\paperwidth]{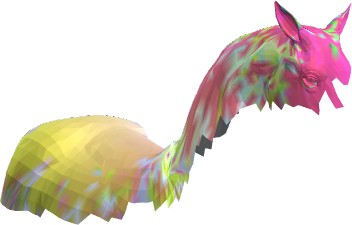} 
    } 
    \\
    \raisebox{-0.5\height}{
    \includegraphics[width = 0.06\paperwidth]{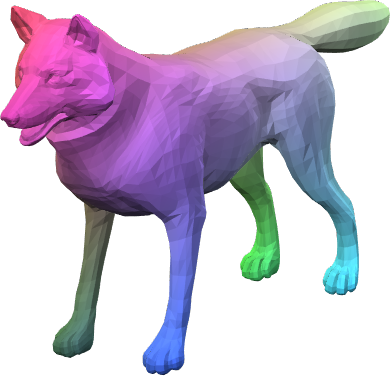} 
    }
    &
    \raisebox{-0.5\height}{
    \includegraphics[width = 0.06\paperwidth]{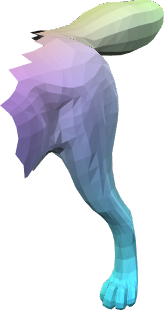} 
    }
    &
    \raisebox{-0.5\height}{
    \includegraphics[width = 0.06\paperwidth]{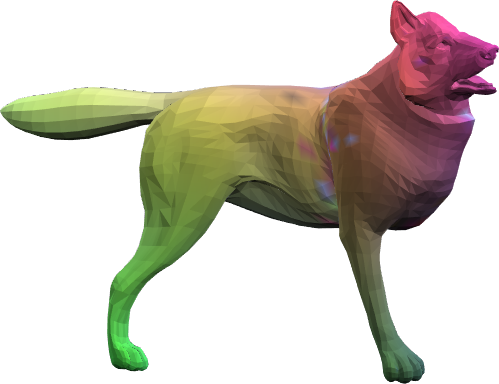} 
    }
    &
    \raisebox{-0.5\height}{
    \includegraphics[width = 0.06\paperwidth]{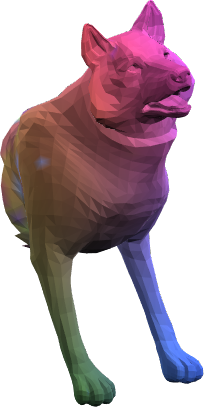} 
    }    
        &
    \raisebox{-0.5\height}{
    \includegraphics[width = 0.06\paperwidth]{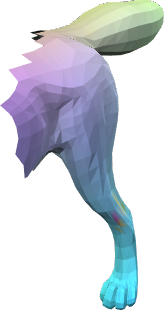} 
    }
    &
    \raisebox{-0.5\height}{
    \includegraphics[width = 0.06\paperwidth]{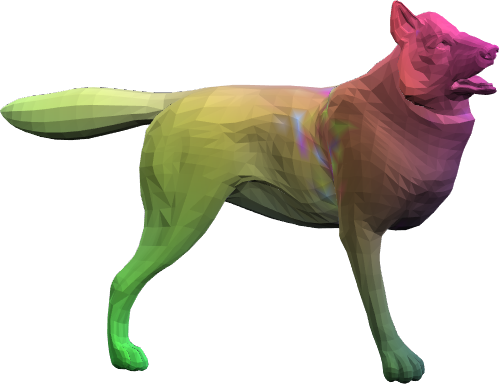} 
    }
    &
    \raisebox{-0.5\height}{
    \includegraphics[width = 0.06\paperwidth]{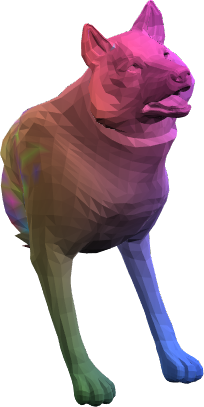} 
    } 
    \\
    \raisebox{-0.5\height}{
    \includegraphics[width = 0.06\paperwidth]{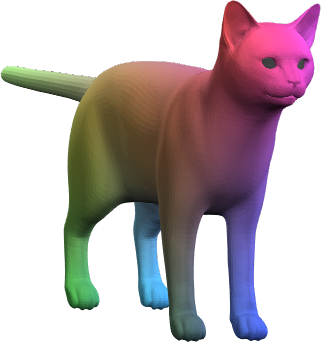} 
    }
    &
    \raisebox{-0.5\height}{
    \includegraphics[width = 0.06\paperwidth]{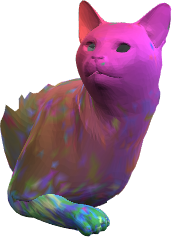} 
    }
    &
    \raisebox{-0.5\height}{
    \includegraphics[width = 0.06\paperwidth]{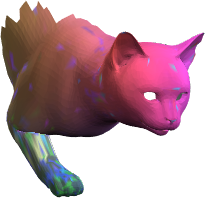} 
    }
    &
    \raisebox{-0.5\height}{
    \includegraphics[width = 0.06\paperwidth]{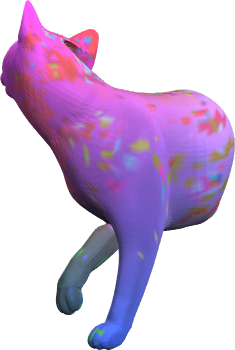} 
    }    
        &
    \raisebox{-0.5\height}{
    \includegraphics[width = 0.06\paperwidth]{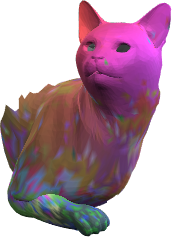} 
    }
    &
    \raisebox{-0.5\height}{
    \includegraphics[width = 0.06\paperwidth]{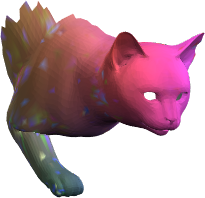} 
    }
    &
    \raisebox{-0.5\height}{
    \includegraphics[width = 0.06\paperwidth]{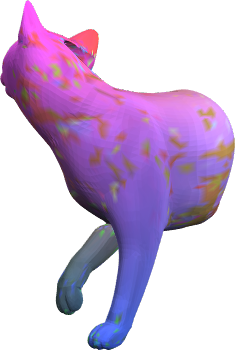} 
    } 
  \end{tabular}
  }
\captionof{figure}{
Visual shape correspondence results on the SHREC'16 Cuts dataset. The source shape is on the left, on which we perform dense correspondence estimation on the shapes to the right, using either ACSCNN or our approach. FieldConv fails to provide meaningful correspondence results on partial shapes so we do not display its qualitative performance.}
\label{fig:qual-shrec-cuts}
\end{figure*}

\begin{figure*}%
  \centerline{%
  \begin{tabular}{c|ccc|ccc}%
    & \multicolumn{3}{|c|}{ACSCNN} & \multicolumn{3}{|c}{Ours}
    \\
    &&&&&&
    \\
    \cline{2-7} &&&&&&
    \\
    \raisebox{-0.5\height}{
    \includegraphics[width = 0.06\paperwidth]{Pictures/render/shrec16holes/color02/holes_centaur_shape_7_M.png} 
    }
    &
    \raisebox{-0.5\height}{
    \includegraphics[width = 0.06\paperwidth]{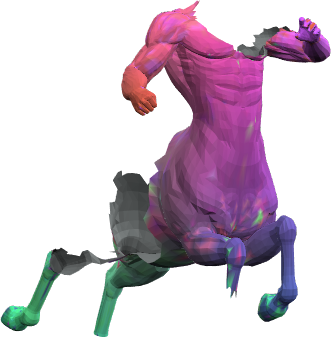} 
    }
    &
    \raisebox{-0.5\height}{
    \includegraphics[width = 0.06\paperwidth]{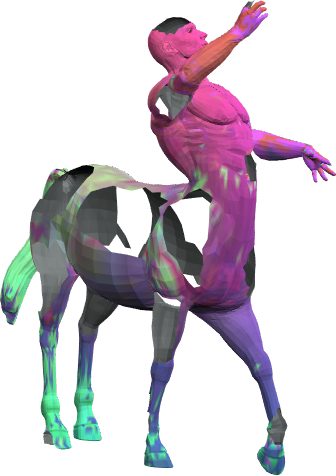} 
    }
    &
    \raisebox{-0.5\height}{
    \includegraphics[width = 0.06\paperwidth]{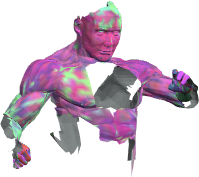} 
    }    
        &
    \raisebox{-0.5\height}{
    \includegraphics[width = 0.06\paperwidth]{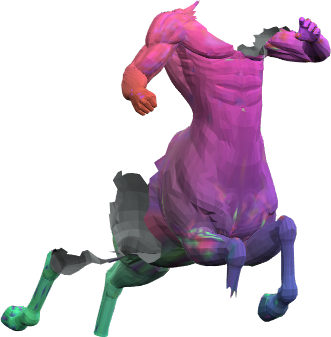} 
    }
    &
    \raisebox{-0.5\height}{
    \includegraphics[width = 0.06\paperwidth]{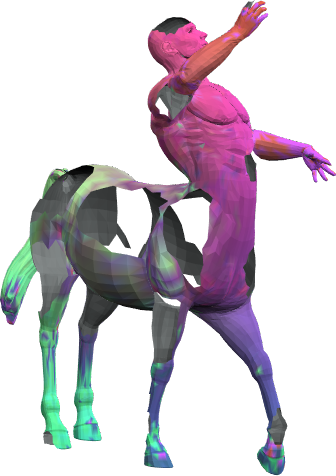} 
    }
    &
    \raisebox{-0.5\height}{
    \includegraphics[width = 0.06\paperwidth]{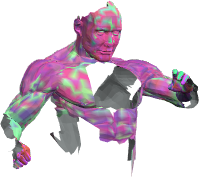} 
    } 
    \\
    \raisebox{-0.5\height}{
    \includegraphics[width = 0.06\paperwidth]{Pictures/render/shrec16holes/color02/holes_horse_shape_7_M.png} 
    }
    &
    \raisebox{-0.5\height}{
    \includegraphics[width = 0.06\paperwidth]{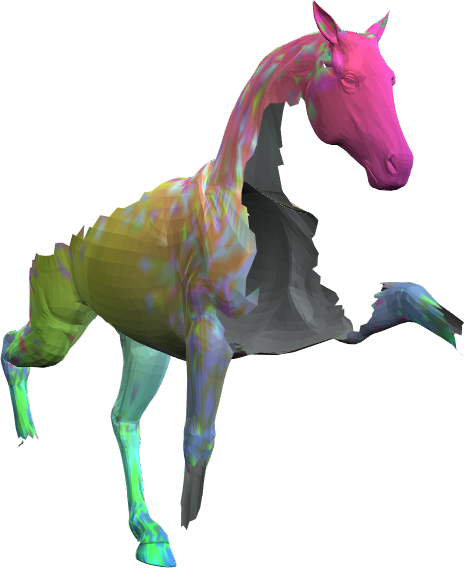} 
    }
    &
    \raisebox{-0.5\height}{
    \includegraphics[width = 0.06\paperwidth]{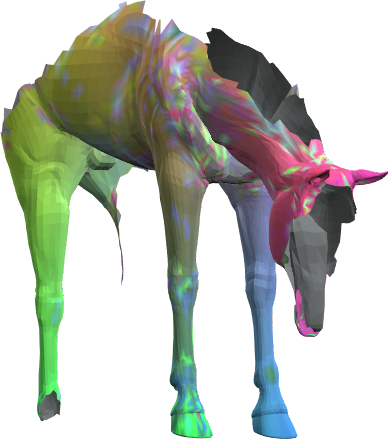} 
    }
    &
    \raisebox{-0.5\height}{
    \includegraphics[width = 0.06\paperwidth]{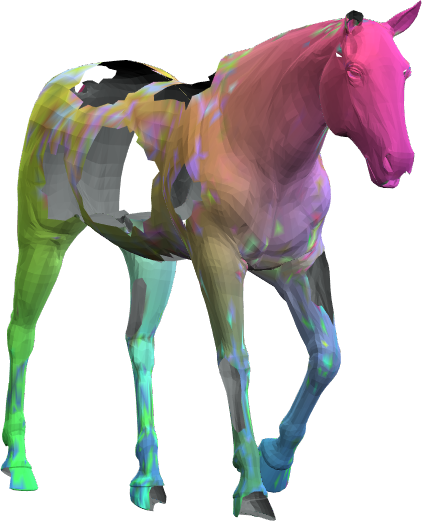} 
    }    
        &
    \raisebox{-0.5\height}{
    \includegraphics[width = 0.06\paperwidth]{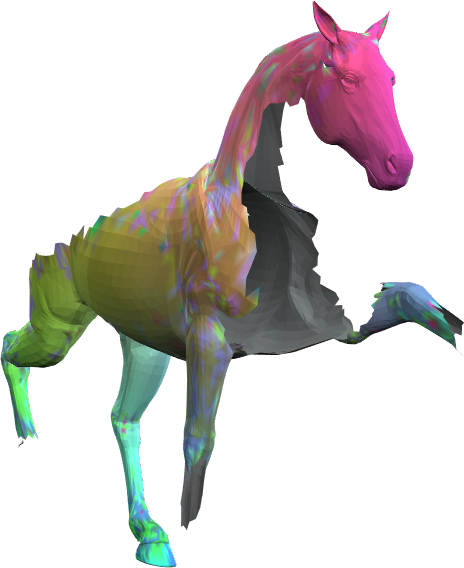} 
    }
    &
    \raisebox{-0.5\height}{
    \includegraphics[width = 0.06\paperwidth]{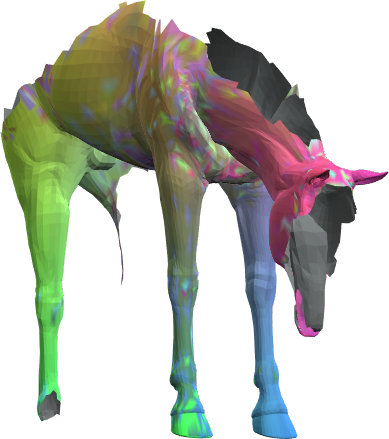} 
    }
    &
    \raisebox{-0.5\height}{
    \includegraphics[width = 0.06\paperwidth]{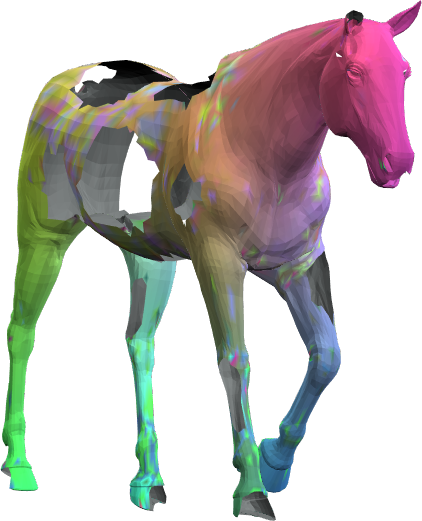} 
    }
    \\
    \raisebox{-0.5\height}{
    \includegraphics[width = 0.06\paperwidth]{Pictures/render/shrec16holes/color02/holes_wolf_shape_7_M.png} 
    }
    &
    \raisebox{-0.5\height}{
    \includegraphics[width = 0.06\paperwidth]{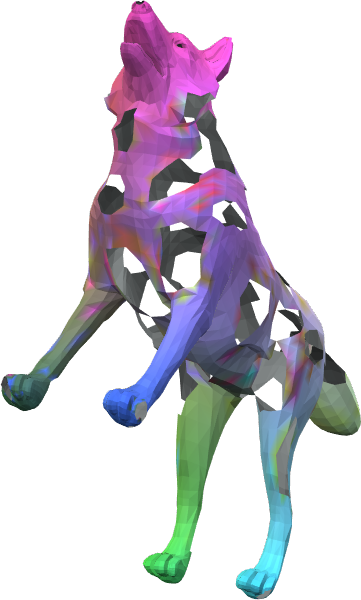} 
    }
    &
    \raisebox{-0.5\height}{
    \includegraphics[width = 0.06\paperwidth]{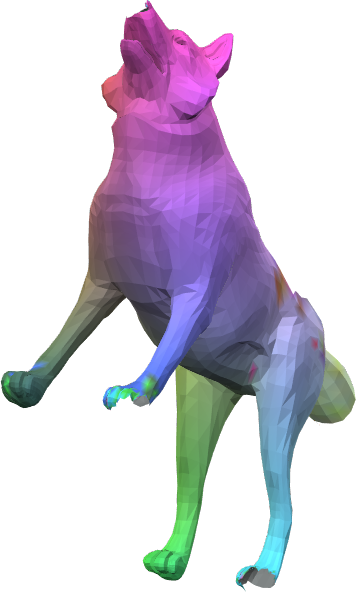} 
    }
    &
    \raisebox{-0.5\height}{
    \includegraphics[width = 0.06\paperwidth]{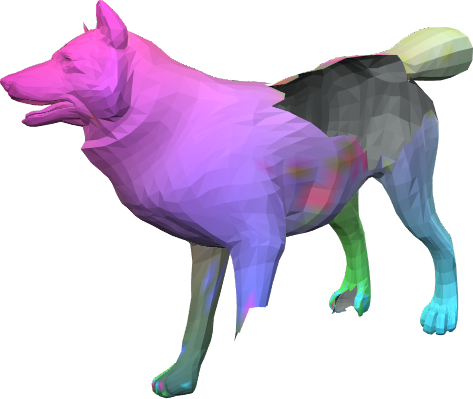} 
    }    
        &
    \raisebox{-0.5\height}{
    \includegraphics[width = 0.06\paperwidth]{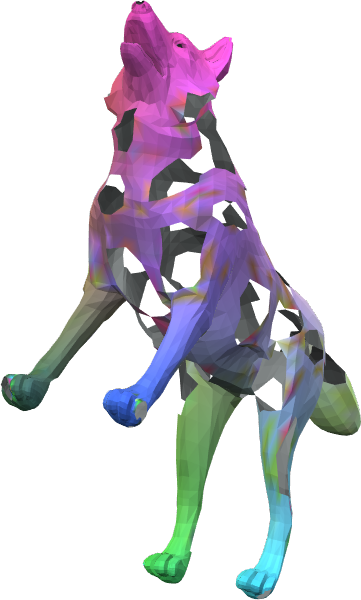} 
    }
    &
    \raisebox{-0.5\height}{
    \includegraphics[width = 0.06\paperwidth]{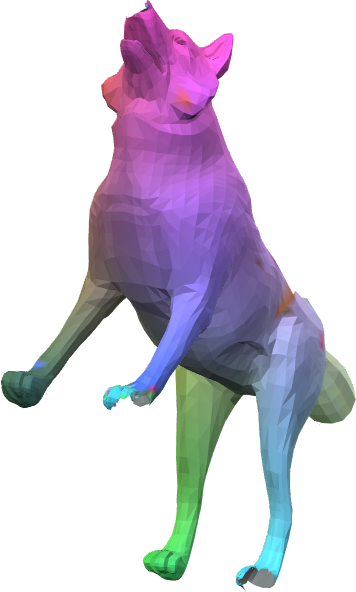} 
    }
    &
    \raisebox{-0.5\height}{
    \includegraphics[width = 0.06\paperwidth]{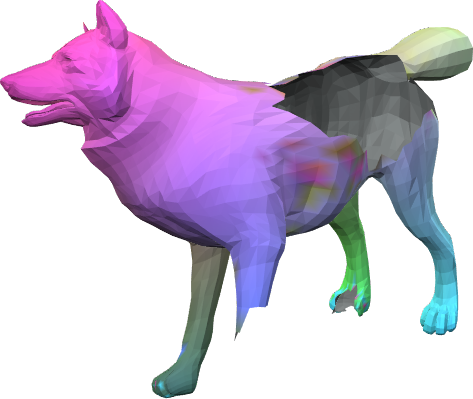} 
    } 
    \\
    \raisebox{-0.5\height}{
    \includegraphics[width = 0.06\paperwidth]{Pictures/render/shrec16holes/color02/holes_cat_shape_7_M.png} 
    }
    &
    \raisebox{-0.5\height}{
    \includegraphics[width = 0.06\paperwidth]{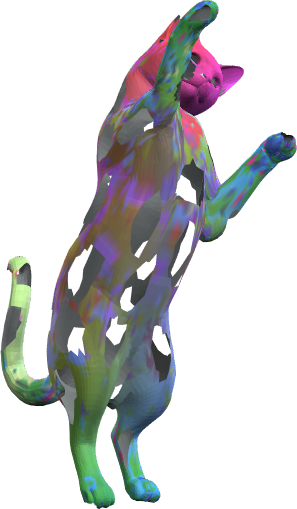} 
    }
    &
    \raisebox{-0.5\height}{
    \includegraphics[width = 0.06\paperwidth]{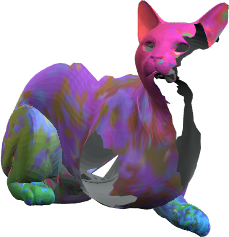} 
    }
    &
    \raisebox{-0.5\height}{
    \includegraphics[width = 0.06\paperwidth]{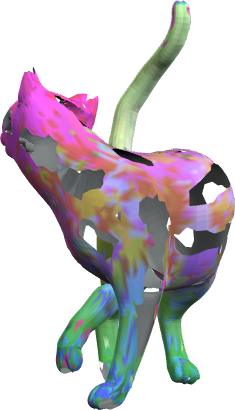} 
    }    
    &
    \raisebox{-0.5\height}{
    \includegraphics[width = 0.06\paperwidth]{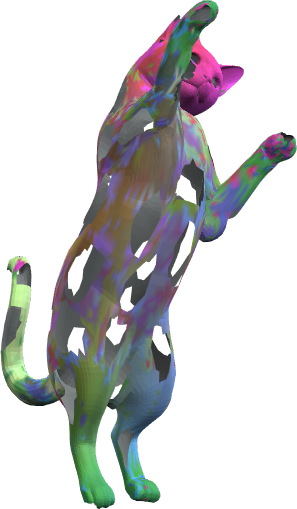} 
    }
    &
    \raisebox{-0.5\height}{
    \includegraphics[width = 0.06\paperwidth]{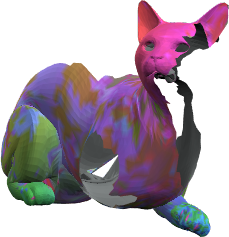} 
    }
    &
    \raisebox{-0.5\height}{
    \includegraphics[width = 0.06\paperwidth]{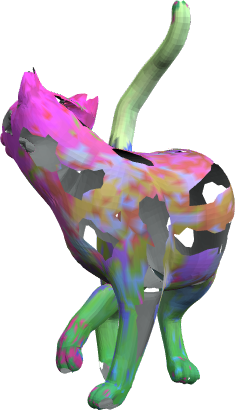} 
    } 
  \end{tabular}
  }
\captionof{figure}{
Visual shape correspondence results on the SHREC'16 Holes dataset. The source shape is on the left, on which we perform dense correspondence estimation on the shapes to the right, using either ACSCNN or our approach. FieldConv fails to provide meaningful correspondence results on partial shapes so we do not display its qualitative performance.}
\label{fig:qual-shrec-holes}
\end{figure*}

\end{document}